\newcommand\BibTeX{{\rmfamily B\kern-.05em \textsc{i\kern-.025em b}\kern-.08em
T\kern-.1667em\lower.7ex\hbox{E}\kern-.125emX}}
\newtheorem{theorem}{Theorem}
\newtheorem{lemma}{Lemma}
\newtheorem{corollary}{Corollary}
\newtheorem{remark}{Remark}
\definecolor{white}{rgb}{0.99,0.99,0.99}\definecolor{gray}{rgb}{0.96,0.96,0.96}
\begin{document}

\runninghead{Li \textit{et al}.}

\title{Efficient and Distributed Large-Scale Point Cloud Bundle Adjustment via Majorization-Minimization}

\author{Rundong Li$^1$, Zheng Liu$^1$, Hairuo Wei$^1$, Yixi Cai$^1$, Haotian Li$^1$ and Fu Zhang$^1$}

\affiliation{\affilnum{1}The Department of Mechanical Engineering, The University of Hong Kong, Hong Kong. {\tt\footnotesize $\{$rdli10010,u3007335,hairuo, yixicai, haotianl$\}$@connect.hku.hk}, {\tt\footnotesize fuzhang@hku.hk}}
\corrauth{Fu Zhang, Mechatronics and Robotic Systems Laboratory, Department of Mechanical Engineering, The University of Hong Kong, HW 7-18,  Pokfulam, Hong Kong.}
\email{fuzhang@hku.hk}

\begin{abstract}
Point cloud bundle adjustment is critical in large-scale point cloud mapping.
However, point cloud bundle adjustment is both computationally and memory intensive, with its complexity growing cubically as the number of scan poses increases.
This paper presents {BALM3.0,} an efficient and distributed large-scale point cloud bundle adjustment method. The proposed method employs the majorization-minimization algorithm to decouple the scan poses in the bundle adjustment process, thus performing the point cloud bundle adjustment on large-scale data with improved computational efficiency.
The key difficulty of applying majorization-minimization on bundle adjustment is to identify the proper upper surrogate cost function.
In this paper, the proposed upper surrogate cost function is based on the point-to-plane distance.
The primary advantages of decoupling the scan poses via a majorization-minimization algorithm stem from two key aspects.
First, the decoupling of scan poses reduces the optimization time complexity from cubic to linear, significantly enhancing the computational efficiency of the bundle adjustment process in large-scale environments.
Second, it lays the theoretical foundation for distributed bundle adjustment. By distributing both data and computation across multiple devices, this approach helps overcome the limitations posed by large memory and computational requirements, which may be difficult for a single device to handle.\\
The proposed method is extensively evaluated in both simulated and real-world environments.
The results demonstrate that the proposed method achieves the same optimal residual with comparable accuracy while offering up to 704 times faster optimization speed and reducing memory usage to 1/8.
Furthermore, this paper also presented and implemented a distributed bundle adjustment framework and successfully optimized large-scale data (21,436 poses with 70 GB point clouds) with four consumer-level laptops.
\end{abstract}

\keywords{Bundle Adjustment, Mapping, Point Cloud}

\maketitle

\section{Introduction}
Large-scale point cloud maps are important in fields such as robotics \cite{hba, krusi2017driving}, city modeling \cite{lafarge2012creating, huang2022city3d, wang2018lidar}, and Geographic Information Systems (GIS) \cite{schwalbe20053d, boyko2011extracting}. With a wide detection range and accurate direct 3D measurements, LiDAR can be used to construct dense and precise point cloud maps efficiently.
Constructing accurate and globally consistent point cloud maps with LiDAR relies heavily on the precise registration of multiple LiDAR scans (also known as multi-view registration \cite{bergevin1996towards, lu1997globally, pulli1999multiview, huber2003fully, borrmann2008globally, govindu2013averaging}).
Typically, a point cloud bundle (plane) adjustment method is employed to jointly estimate the LiDAR poses and map parameters, achieving consistency and accuracy in the mapping process.

Recently, robotics researchers have proposed various
approaches to address the bundle adjustment problem for LiDAR point clouds \cite{ferrer2019eigen, liu2021balm, zhou2021pi, huang2021bundle, liu2023efficient}. These methods construct a cost function (e.g., point-to-plane, point-to-edge, plane-to-plane) as a metric and directly optimize it concerning LiDAR poses and map parameters, typically using a second-order
optimizer like the Levenberg-Marquardt algorithm  \cite{liu2021balm, liu2023efficient}. These approaches ensure global
consistency and high accuracy and have been successful in small-scale mapping tasks and sliding-window-based SLAM systems \cite{liu2024voxel}.

However, when the optimization scale increases, these methods face challenges due to time and memory costs.
Due to overlapping observations between different LiDAR scans, their scan poses in bundle adjustment are coupled, resulting in a dense Hessian matrix whose dimension is proportional to the number of scan poses. This Hessian matrix is then involved in a linear equation and {solved} repeatedly in the Levenberg-Marquardt algorithm to determine the optimal scan poses \cite{liu2021balm, liu2023efficient}.
Since the computation time for {solving} the Hessian matrix is cubic to its dimension {(i.e., LiDAR poses)} \cite{allaire2008numerical}, the bundle adjustment problem quickly becomes prohibitive for large-scale scenes with hundreds of thousands LiDAR scans, where the required computation and memory resources are beyond the affordability of individual computers.
Although some approaches have attempted to combine bundle adjustment or pairwise registration methods with pose graph optimization (PGO) \cite{hba, gpugicp, pang2024lm}, the PGO only considers constraints from registered relative scan poses, while the mapping consistency indicated by the raw points is completely ignored. As a result, these approaches need to introduce a large number of redundant registration pairs \cite{gpugicp} or iteratively construct and optimize the pose graph \cite{hba}, which increases both time and memory consumption, ultimately leading to a declining {efficiency overall}.

In addition to the cubic time growth of the bundle adjustment process, the high storage overhead caused by the dense point measurements and their processing cannot be ignored \cite{yu2024slim, pang2024lm}.
When the area of the mapping environment expands, the number of LiDAR scans that need to be stored, maintained, and optimized will exceed the capacity of a single computing device.
In such cases, a practical approach is to distribute the data across multiple computing devices and optimize it using corresponding distributed bundle adjustment algorithms. However, the coupling of scan poses in bundle adjustment presents a fundamental challenge for distributing both data storage and optimization. While multiple attempts have been made on visual bundle adjustment \cite{ren2022megba, demmel2020distributed, zhou2020stochastic, mmvba}, little research has focused on point cloud bundle adjustment.

Recently, \cite{mmvba} successfully decoupled the state variables with a majorization-minimization method in visual bundle adjustment, improving the time efficiency {by} hundreds of times.
In this paper, we have revealed that the LiDAR scan poses in the point cloud bundle adjustment process can also be completely decoupled via the majorization-minimization method.
The majorization-minimization method simplifies complex optimization problems by constructing a surrogate function that is easier to minimize. The surrogate function is designed to be an upper bound of the original cost function, so any improvements on the surrogate function are ensured to improve the original cost as well. By iteratively constructing and minimizing this surrogate function, the method gradually converges to the minimum of the original problem.

In this paper, we propose such a surrogate cost function of the point-to-plane cost {in the point cloud bundle adjustment problem formulation} \cite{liu2023efficient}.
Unlike the original cost function, the proposed surrogate function decouples all the LiDAR scan poses, offering two significant advantages.
First, the decoupled LiDAR poses reduce the optimization time from cubic to linear, significantly improving the computational efficiency of bundle adjustment in large-scale environments.
Second, decoupling the LiDAR poses enables performing the bundle adjustment process across multiple devices. By distributing the data and computation, we can overcome the limitations of memory and computational resources that a single device may struggle to handle.
The main contributions of this paper are summarized as follows:
\begin{itemize}
    \item We {propose} an upper surrogate cost function for the point-to-plane residual in point cloud bundle adjustment, which allows us to decouple LiDAR poses during the bundle adjustment process.
    \item We formally {prove} the convergence of the proposed upper surrogate cost function and {provide} analytical expressions for its first-order and second-order derivatives to facilitate Levenberg-Marquardt (LM) optimization.
    \item We {design} both single-device and distributed bundle adjustment frameworks for large-scale mapping, which, to the best of our knowledge, is the first distributed point cloud bundle adjustment framework.
    \item We {implement} the proposed surrogate cost function and frameworks {into a complete system using C++, named BALM3.0,} and conducted exclusive experiments to evaluate the convergence, accuracy, and efficiency of the proposed method.
\end{itemize}

The remainder of the paper is organized as follows: In Section \ref{sec.relatedWorks}, we discuss related research works.
In Section \ref{bundle_adjustment_formulation}, we present the formulation and proof of our proposed surrogate function, along with its first-order and second-order derivatives.
{Building on the proposed surrogate function, we also present a novel distributed bundle adjustment framework in this section.}
In Section \ref{sec.implementation}, we introduce the implementation details of our proposed method. In Section \ref{simulation}, we report the evaluation of the proposed method in a simulated environment, including its convergence, accuracy, and efficiency.
In Section \ref{benchmark}, we present benchmark comparisons on open datasets.
In Section \ref{sec.distributed_mapping_experiment}, we report distributed bundle adjustment experiments conducted on multiple devices. Finally, in Section \ref{sec.conclusion}, we provide the conclusions.

\section{Related Works} \label{sec.relatedWorks}

\subsection{Point Cloud Bundle Adjustment}

In recent years, researchers in the robotics community have begun addressing multi-view scan registration problems by developing point cloud Bundle Adjustment (BA) techniques.
Kaess \cite{pa, slamip, kfpslam} utilizes plane features from point clouds and proposes a plane adjustment method. This method minimizes the point-to-plane distance across all scan poses and plane parameters using Levenberg–Marquardt (LM) optimization.
Ferrer \cite{ferrer2019eigen} exploits a similar point-to-plane distance approach as used in \cite{pa}, but eliminates the plane parameters from the optimization process. This reduction simplifies the optimization to finding the minimum eigenvalue of a covariance matrix, considering only the scan poses. As a result, the method is termed ``eigen-factor.'' Ferrer \cite{ferrer2019eigen} then utilizes a first-order solver to solve the resulting optimization problem.

Later, our work \cite{liu2021balm} introduced BALM, which takes a further step toward more efficient BA. Similar to \cite{pa, ferrer2019eigen}, BALM minimizes the natural point-to-plane distance and analytically solves the plane parameters using closed-form solutions before the numerical optimization. As a result, the optimization problem depends only on the scan poses, with the large number of plane parameters completely eliminated. BALM then derives the analytical Jacobian and Hessian, and develops an efficient second-order solver to address the optimization problem. Compared to \cite{pa, slamip, kfpslam}, BALM achieves significantly lower optimization dimensionality (and thus increased efficiency) by removing plane parameters from the optimization variables. Additionally, compared to Ferrer \cite{ferrer2019eigen}, BALM demonstrates much faster convergence due to the use of a second-order solver.

To further enhance the computational efficiency of \cite{liu2021balm}, our latest work, BALM2 \cite{liu2023efficient}, introduces the concept of point clusters, which eliminates the need to enumerate each individual raw point when calculating the cost function, Jacobian, and Hessian matrices as done in \cite{liu2021balm}. BALM2 then derives the analytical forms of the cost function, Jacobian, and Hessian matrix using the point cluster representation and develops an efficient second-order solver for bundle adjustment optimization. As demonstrated in \cite{liu2023efficient}, BALM2 is currently the most efficient point cloud bundle adjustment technique available.

Despite these advancements, the efficiency of \cite{liu2023efficient} remains inadequate for large-scale environments. The primary bottleneck lies in the cubic time complexity of its second-order optimizer, leading to prohibitive time costs as the number of scan poses increases. In this paper, the proposed method effectively addresses this issue by employing the majorization-minimization approach to decouple all scan poses in the bundle adjustment problem.

\subsection{Majorization-Minimization Approach}
The majorization-minimization approach is widely used in fields such as signal processing \cite{figueiredo2007majorization, bioucas2006total}, communications \cite{sun2016majorization, gong2020majorization}, and machine learning \cite{mairal2015incremental}. This method simplifies complex optimization problems by constructing a surrogate function that is easier to minimize. The surrogate function serves as an upper bound to the original cost function, ensuring that any improvement in the surrogate function also improves the original cost function. By iteratively constructing and minimizing the surrogate function, the method gradually converges to a minimum of the original problem.

In the field of {Simultaneous Localization and Mapping (SLAM)} research, the majorization-minimization algorithm was first applied to Pose Graph Optimization (PGO). \cite{fan2020majorization} proposed a surrogate cost function for the maximum likelihood estimation of the pose graph. In their proposed surrogate function, the state variables are decoupled, enabling distributed optimization. Their more recent work \cite{fan2023majorization} extends these preliminary results and achieves decentralized optimization. On the other hand, \cite{mmvba} successfully applied the majorization-minimization algorithm to visual bundle adjustment. In \cite{mmvba}, the reprojection error for visual bundle adjustment was first reformulated, allowing the depth estimation for each observation to be expressed analytically. Based on this reformulated reprojection error, they proposed a surrogate function in which the camera poses and map points are decoupled.

Inspired by \cite{mmvba}, in this paper, we further explore the potential of applying the majorization-minimization approach to point cloud bundle adjustment and propose a surrogate cost function in which the LiDAR scan poses are completely decoupled. As a result, our method significantly improves the optimization speed of point cloud bundle adjustment on large-scale data. Furthermore, it supports optimization across multiple devices, enabling distributed bundle adjustment.

\subsection{Large-scale Point Cloud Mapping}
Due to the gap between increasing requirements for large-scale environment mapping and the efficiency limitations of multi-view registration, including bundle adjustment for point cloud data, large-scale point cloud mapping has attracted significant attention in the research community.
Given the cubic time complexity of state-of-the-art point cloud bundle adjustment problems, most existing approaches aim to solve the large-scale point cloud mapping problem through a combination of pairwise or multi-view registration and pose graph optimization \cite{gpugicp, hba, pang2024lm}.

In \cite{hba}, PGO was combined with the point cloud bundle adjustment method from \cite{liu2023efficient}, resulting in a bottom-up hierarchical approach. In this framework, the method described in \cite{liu2023efficient} is applied exclusively within a sliding window, thereby mitigating significant time costs. To ensure map consistency, the framework is iteratively executed multiple times.
The work in \cite{gpugicp} combined the Generalized Iterative Closest Point (GICP) algorithm with PGO. To ensure map consistency during PGO optimization, they introduced a large number of GICP pairs into the pose graph and utilized a GPU to compute pairwise registrations in parallel, addressing the efficiency challenges posed by the large number of GICP registrations.
Additionally, \cite{pang2024lm} proposed a pose-graph-based hierarchical optimization framework. They partitioned the original point cloud data into smaller map blocks to reduce memory load and further enhanced the adaptive voxelization process from \cite{liu2021balm} by removing low-quality plane features, thereby improving both efficiency and accuracy.

{Rather than working around the high computation complexity of large scale point cloud bundle adjustment as in the aforementioned methods by combining with PGO, this paper formally attacks the computation complexity problem of computation efficiency of large scale point cloud bundle adjustment. The substantial efficiency improvements achieved by the proposed method eliminate the dependence on pose graph optimization, thereby avoiding the redundant computations typically associated with PGO. Moreover, the method is inherently compatible with existing mapping systems, such as \cite{hba, pang2024lm}, offering the potential to further enhance their performance.}

\section{Upper Surrogate Cost Formulation and Optimization} \label{bundle_adjustment_formulation}
In this chapter, we derive our upper surrogate cost function formulation and optimization. First, we briefly introduce the preliminaries including the point-to-plane distance and principle of the majorization-minimization method (Section. \ref{sec.preliminaries}). Then we introduce the proposed upper surrogate cost function, the corresponding first-order and second-order derivatives, and its optimization (Section. \ref{sec.BAformulation}). Throughout this paper, we use notations summarized in Table \ref{table.notation} or otherwise specified in the context.
\begin{table}[t]
	\caption{Nomenclatures} \label{table.notation}
	\centering
    \resizebox{\columnwidth}{!}{
    \begin{tabular}{rp{7cm}}
		\toprule
		Notation & \qquad\qquad\qquad\qquad Explanation \\
		\midrule
		$\mathbb R^{m \times n}$ & The set of $m \times n$ real matrices. \\
		$\mathbb S^{m \times m}$ & The set of $m \times m$ symmetric matrices. \\
		$\boxplus$  & The encapsulated ``boxplus'' operations on manifold.\\
		$\lambda_{min}(\cdot)$ & The minimal eigenvalue of the matrix $(\cdot)$. \\
            $\mathbf u_{min}(\cdot)$ & The eigenvector corresponding to the minimal eigenvalue of the matrix $(\cdot)$. \\
            $(\cdot)^{(k)}$ & The values of $(\cdot)$ calculated with the states at the $k$-th iteration. \\
            $\lfloor \cdot \rfloor$  & The skew symmetric matrix of $(\cdot)$. \\
		\bottomrule 
	\end{tabular}}
\end{table}

\subsection{Preliminaries} \label{sec.preliminaries}
\subsubsection{Point-to-Plane Cost Function}
Assume there are $M_f$ plane features, each denoted by $\boldsymbol{\pi}_i$ ($i=1,...,M_f$), and $M_p$ LiDAR scans, each pose denoted by $\mathbf T_j = (\mathbf R_j, \mathbf t_j)$ ($j=1,...,M_p$). The bundle adjustment problem is formulated as minimizing the natural distance between each LiDAR raw point to its corresponding plane \cite{pa}. Analytically solving the plane parameters $\boldsymbol{\pi}_i$ and using the point cluster representation, our previous work \cite{liu2023efficient} has formulated the bundle adjustment problem as (see (22) in \cite{liu2023efficient}) 

\begin{align}
    \min_{\mathbf T_j \in SE(3), \forall j} c(\mathbf T) &= \sum_{i=1}^{M_f} \underbrace{ \lambda_{\min} \left( \frac{1}{N_i} \mathbf P_i - \frac{1}{N_i^2} \mathbf v_i \mathbf v_i^T \right)}_{c_i (\mathbf T)}, \; \nonumber \\
    \quad \mathbf P_i &= \sum_{j=1}^{M_p} \mathbf P_{ij}, \mathbf v_i = \sum_{j=1}^{M_p} \mathbf v_{ij}  \label{BA-formulation-reduced-reduced}
\end{align}
where $\lambda_{\text{min}} ( \cdot )$ represents the minimal eigenvalue of the input matrix, $N_i$ is the total number of raw points on the $i$-th plane feature observed by all $M_p$ LiDAR scans, and 
\begin{align}
    {\mathbf P}_{ij} &= \mathbf R_j \mathbf P_{f_{ij}} \mathbf R_j^T + \mathbf R_j \mathbf v_{f_{ij}} \mathbf t_j^T + \mathbf t_j \mathbf v_{f_{ij}}^T \mathbf R_j^T + N_{ij} \mathbf t_j \mathbf t_j^T, \; \nonumber \\
    \quad {\mathbf v}_{ij} &= \mathbf R_j \mathbf v_{f_{ij}} + N_{ij} \mathbf t_j \; \nonumber \\
    \mathbf P_{f_{ij}} &= \sum_{k=1}^{N_{ij}} \mathbf p_{f_{ijk}} \mathbf p_{f_{ijk}}^T, \quad
    \mathbf v_{f_{ij}} = \sum_{k=1}^{N_{ij}} \mathbf p_{f_{ijk}} \label{feature-pc}
\end{align}
with $N_{ij}$ being the total number of raw points on the $i$-the plane feature observed by the $j$-th scan, and $\mathbf p_{f_{ijk}}$ being the $k$-th raw point of them.  Notice that the point clusters,  $\mathbf P_{f_{ij}}$ and $\mathbf v_{f_{ij}}$, remain independent from the scan poses $\mathbf T_j$, hence need to be calculated only once prior to the bundle adjustment optimization. After that, the cost function in (\ref{BA-formulation-reduced-reduced}) can be calculated from the pre-computed point clusters without enumerating each raw points $\mathbf p_{f_{ijk}}$.

\subsubsection{Majorization-Minimization Method} \label{sec.majorization_minimization}
The majorization-minimization approach seeks to find a surrogate cost function at each step of the optimization iteration. Denote $c_M(\mathbf{T}|\mathbf{T}^{(k)})$ the surrogate function  at the $k$-th iteration with the optimal solution $\mathbf{T}^{(k)}$, a valid surrogate function should satisfy:
\begin{align} \label{eq.mm_req1}
     c_M(\mathbf T|\mathbf T^{(k)})&\geq c(\mathbf T); \quad c_M(\mathbf T^{(k)}|\mathbf T^{(k)}) = c(\mathbf T^{(k)}) \; \nonumber \\
    &\forall \mathbf T \in SE(3) \times \cdots \times SE(3)
\end{align}
The majorization-minimization approach then optimizes the surrogate function $c_M(\mathbf{T}|\mathbf{T}^{(k)})$ (instead of the original cost function $c(\mathbf{T})$) to obtain the next iteration optimal solution $\mathbf T^{(k+1)}$. Due to the conditions in (\ref{eq.mm_req1}) and $\mathbf T^{(k+1)}$ being the optimal solution of $c_M(\mathbf{T}|\mathbf{T}^{(k)})$, we have
\begin{align} \label{eq.mm_2}
    c(\mathbf T^{(k+1)}) \leq c_M(\mathbf T^{(k+1)}|\mathbf T^{(k)})
    \leq c_M(\mathbf T^{(k)}|\mathbf T^{(k)}) = c(\mathbf T^{(k)})
\end{align}
That is, optimizing the surrogate function $c_M(\mathbf{T}|\mathbf{T}^{(k)})$ will ensure to improve the original cost function $c(\mathbf{T})$.  Therefore, $\mathbf T^{(k+1)}$ will be a valid update of the original optimization problem as well (see Figure \ref{fig.mm_theory}(a)). 

The majorization-minimization approach gives us the flexibility to choose the surrogate function, as long as the conditions (\ref{eq.mm_req1}) are met. In particularly, if the surrogate function has all the optimization variables being completely decoupled, that is,
\begin{align}
\label{eq.mm_req2}
    c_M(\mathbf T|\mathbf T^{(k)}) = \sum_{j=1}^{M_p} c_{M_j} (\mathbf T_j | \mathbf T^{(k)})
\end{align}
\normalsize
where $c_{M_j} (\mathbf T_j | \mathbf T^{(k)})$ is any function but is related to only one pose $\mathbf T_j$, then its Hessian matrix would be a (block) diagonal matrix which can be solved in linear time (see Figure \ref{fig.mm_theory} (b)). 

\begin{figure} [ht] 
	\centering
	\includegraphics[width=\linewidth]{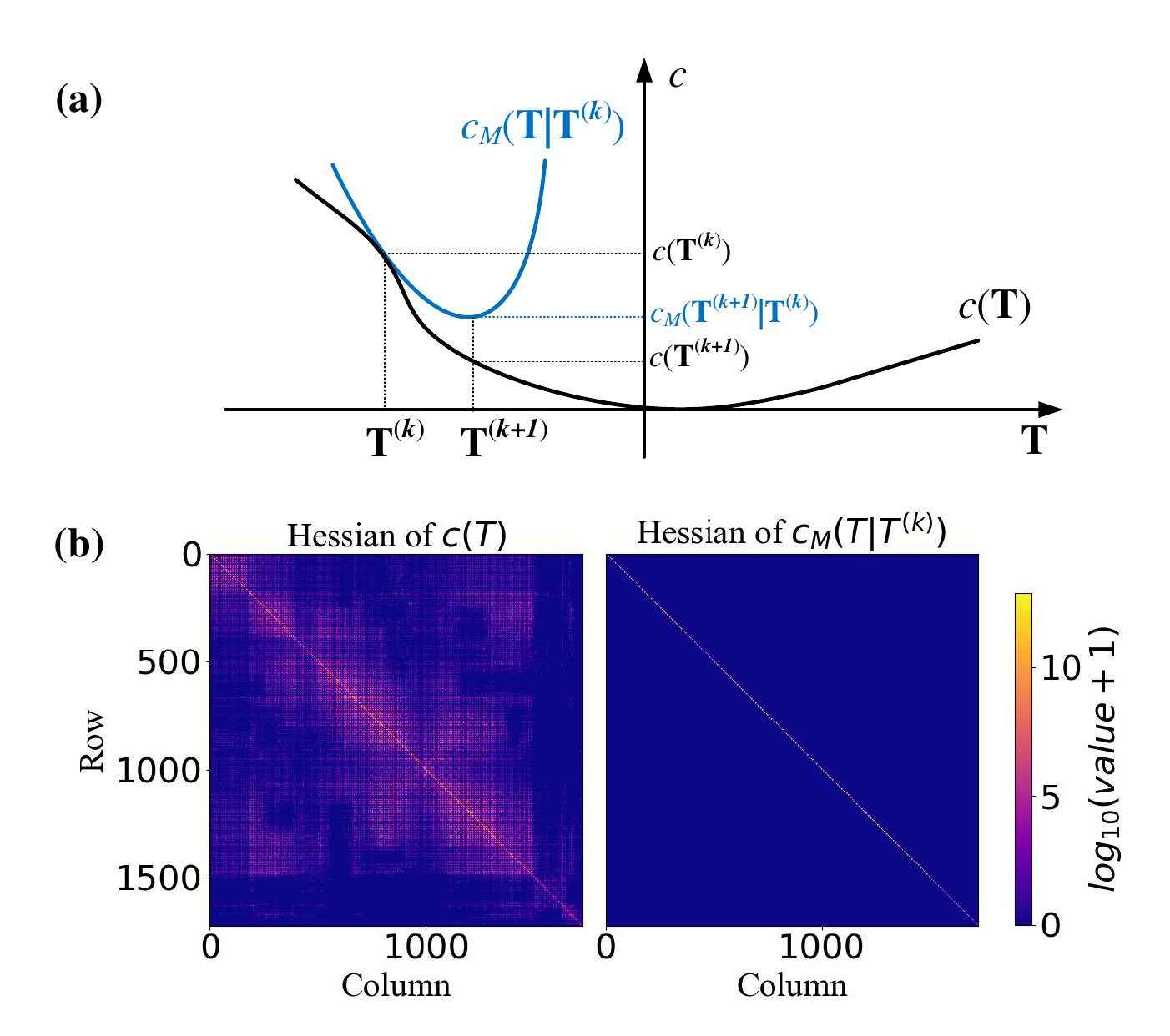}
	\caption{Explanation of key technical details in majorization-minimization: (a) Optimizing the upper surrogate cost function $c(\mathbf{T}|\mathbf{T}^{(k)})$ yields a state $\mathbf{T}^{(k+1)}$ that ensures $c(\mathbf{T}^{(k+1)}) \leq c(\mathbf{T}^{(k)})$, thus decreasing the original cost function $c(\mathbf{T})$. (b) Compared to the original cost function, the upper surrogate cost function features a block-diagonal Hessian matrix, reducing the solving time from cubic to linear.}
    \label{fig.mm_theory}
\end{figure}
\subsection{Upper Surrogate Cost Function} \label{sec.BAformulation}
According to the preliminaries mentioned in Sec. \ref{sec.preliminaries}. Our goal is to find a majorized cost item $c_{M}(\mathbf T | \mathbf T^{(k)})$, which should fulfill (\ref{eq.mm_req1}) and (\ref{eq.mm_req2}). It should be noted that there are multiple $c_{M}(\mathbf T | \mathbf T^{(k)})$ fulfilling the requirements. In this section, we derive one cost function that fulfills all the requirements above.
We first give Lemma \ref{lemma1}:
\begin{lemma} \label{lemma1}
For a set of scalars $x_i\in \mathbb R$ and $z\in \mathbb R$, we have:
\begin{align}
    \bigg( \sum_{i=1}^{n} x_i \bigg)^2 \ge -4 z^2 + 4 \bigg( \sum_{i=1}^{n} x_i \bigg) z
\end{align}
\normalsize
and the inequality holds if:
\begin{align}
    z^\star=\frac{1}{2}\sum_{i=1}^n x_i
\end{align}
\normalsize
\end{lemma}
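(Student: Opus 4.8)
The plan is to prove this by reducing the inequality to a single perfect square. Writing $S := \sum_{i=1}^{n} x_i$ to compress notation, the claimed inequality reads $S^2 \ge -4z^2 + 4Sz$. The first step is simply to collect every term on the left-hand side, which turns the statement into the assertion that a certain quadratic in $z$ (equivalently in $S$) is nonnegative:
\begin{align}
    S^2 + 4z^2 - 4Sz \ge 0.
\end{align}

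The key observation — and really the only idea needed — is that the left-hand side is an exact square. Indeed, $S^2 - 4Sz + 4z^2 = (S - 2z)^2$, so the inequality is equivalent to $(S - 2z)^2 \ge 0$, which holds for all real $S$ and $z$ since the square of any real number is nonnegative. This immediately establishes the bound for arbitrary $x_i$ and $z$. I would then read off the equality condition from the same identity: $(S - 2z)^2 = 0$ if and only if $S - 2z = 0$, i.e. $z = \tfrac{1}{2} S = \tfrac{1}{2}\sum_{i=1}^{n} x_i$, which is exactly the stated optimizer $z^\star$.

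There is no genuine obstacle here; the lemma is elementary and the ``hard part'' is merely recognizing the completion-of-square pattern hidden by the factor of $4$ and the sign convention. The only point worth stating carefully is that the inequality in the lemma is being used as a \emph{lower bound on a square by a linear-in-$z$ expression}, which is precisely the form of majorant one wants for the majorization-minimization construction: the right-hand side $4Sz - 4z^2$ is a concave (affine-plus-negative-quadratic) surrogate in $z$ that touches $S^2$ tangentially at $z = z^\star$. I would therefore phrase the conclusion so that it is immediately clear how this tight quadratic lower bound will later be specialized (with $z$ playing the role of an auxiliary variable) to decouple the summed contributions $x_i$ inside the point-to-plane cost.
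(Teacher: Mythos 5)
Your proof is correct and follows exactly the same route as the paper's: substitute $S = \sum_{i=1}^n x_i$, move all terms to one side, recognize $(S-2z)^2 \ge 0$, and read off the equality condition $z^\star = \tfrac{1}{2}S$. No gaps; the added remark about the majorization role of the bound is accurate but not needed for the lemma itself.
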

\begin{proof}
See Appendix \ref{proof.lemma1}. 
\end{proof}
Based on Lemma \ref{lemma1}, we can prove the following theorem.
\begin{theorem} \label{theorem.surrogate_cost_function}
Given
\begin{itemize}
    \item[(1)] The $i$-th planar feature visualized by $M_p$ LiDAR scans with each pose denoted as $\mathbf T_j \in SE(3), j = 1, \cdots, M_p$;
    \item[(2)] The point cluster matrices $\mathbf P_{ij}, \mathbf v_{ij};\quad j = 1, \cdots, M_p;$ and the point-to-plane residual represented as (\ref{BA-formulation-reduced-reduced});
\end{itemize}
Let 
\begin{align}\label{eq:surrogate}
    c_{M_i} (\mathbf T | \mathbf T^{(k)}) \triangleq& \sum_{j=1}^{M_p} \left( \frac{1}{N_i} {\mathbf u_{\min}^{(k)}}^T {\mathbf P}_{ij} \mathbf u_{\min}^{(k)} - \frac{4 z^{(k)}}{N_i^2} {\mathbf u_{\min}^{(k)}}^T {\mathbf v}_{ij}  \right) \; \nonumber \\
    &+ \frac{4}{N_i^2} {z^{(k)}}^2
\end{align}
\normalsize
where $\mathbf u_{\min}^{(k)}$ is the eigenvector associated to the minimum eigenvalue of the matrix $\frac{1}{N_i} \mathbf P_i^{(k)} - \frac{1}{N_i^2} \mathbf v_i^{(k)} {\mathbf v_i^{(k)}}^T$ and $z^{(k)} = \frac{1}{2}\sum_{j=1}^{M_p}{\mathbf u_{\min}^{(k)}}^T {\mathbf v}_{ij}^{(k)}$ with $\mathbf P_i^{(k)} , \mathbf v_i^{(k)}, {\mathbf v}_{ij}^{(k)}$ being ${\mathbf P}_{i}, {\mathbf v}_{i}, {\mathbf v}_{ij}$, respectively, evaluated at $\mathbf T^{(k)}$. Then, $c_{M_i} (\mathbf T | \mathbf T^{(k)})$ is a valid surrogate function of the $i$-th cost item $c_i(\mathbf T)$ of (\ref{BA-formulation-reduced-reduced}) satisfying (\ref{eq.mm_req1}).
\end{theorem}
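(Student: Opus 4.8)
The plan is to verify the two requirements of (\ref{eq.mm_req1}) by sandwiching $c_i(\mathbf T)$ between itself and $c_{M_i}(\mathbf T | \mathbf T^{(k)})$ through a chain of two inequalities: one from the variational characterization of the minimum eigenvalue, and one from Lemma \ref{lemma1}. Write $\mathbf A(\mathbf T) = \frac{1}{N_i}\mathbf P_i - \frac{1}{N_i^2}\mathbf v_i\mathbf v_i^T$ for the symmetric matrix whose smallest eigenvalue is $c_i(\mathbf T)$. Since $\mathbf u_{\min}^{(k)}$ is a fixed unit vector computed at the previous iterate, the Rayleigh-quotient inequality $\lambda_{\min}(\mathbf A)\le \mathbf u^T\mathbf A\,\mathbf u$ (valid for any unit $\mathbf u$) gives $c_i(\mathbf T)=\lambda_{\min}(\mathbf A(\mathbf T)) \le {\mathbf u_{\min}^{(k)}}^T \mathbf A(\mathbf T)\,\mathbf u_{\min}^{(k)}$ for every $\mathbf T$.

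First I would expand this Rayleigh quotient. Substituting $\mathbf P_i=\sum_j \mathbf P_{ij}$ and $\mathbf v_i=\sum_j \mathbf v_{ij}$, the quadratic term becomes $\frac{1}{N_i^2}\big({\mathbf u_{\min}^{(k)}}^T\mathbf v_i\big)^2 = \frac{1}{N_i^2}\big(\sum_j x_j\big)^2$, where I set the scalars $x_j \triangleq {\mathbf u_{\min}^{(k)}}^T\mathbf v_{ij}$, which depend on $\mathbf T$ through $\mathbf v_{ij}$. This squared sum is exactly the obstacle to separability, because its cross terms $x_j x_\ell$ couple distinct poses and would violate (\ref{eq.mm_req2}). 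The role of Lemma \ref{lemma1} is to linearize it: applied to $-\big(\sum_j x_j\big)^2$ with the fixed constant $z=z^{(k)}$ it yields $-\big(\sum_j x_j\big)^2 \le 4{z^{(k)}}^2 - 4 z^{(k)}\sum_j x_j$. Dividing by $N_i^2$ and adding back the (already decoupled) $\mathbf P_{ij}$ term reproduces term-by-term the right-hand side of (\ref{eq:surrogate}), establishing the upper-bound half $c_{M_i}(\mathbf T|\mathbf T^{(k)})\ge c_i(\mathbf T)$.

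For the tightness condition $c_{M_i}(\mathbf T^{(k)}|\mathbf T^{(k)}) = c_i(\mathbf T^{(k)})$, I would evaluate at $\mathbf T=\mathbf T^{(k)}$ and argue that both links of the chain collapse to equalities. The Rayleigh inequality is tight there because, by construction, $\mathbf u_{\min}^{(k)}$ is the eigenvector of $\mathbf A(\mathbf T^{(k)})$ associated with its smallest eigenvalue, so ${\mathbf u_{\min}^{(k)}}^T\mathbf A(\mathbf T^{(k)})\mathbf u_{\min}^{(k)} = \lambda_{\min}(\mathbf A(\mathbf T^{(k)})) = c_i(\mathbf T^{(k)})$. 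The Lemma \ref{lemma1} inequality is tight there because, at $\mathbf T^{(k)}$, the scalars become $x_j = {\mathbf u_{\min}^{(k)}}^T\mathbf v_{ij}^{(k)}$ and the chosen $z^{(k)}=\frac{1}{2}\sum_j {\mathbf u_{\min}^{(k)}}^T\mathbf v_{ij}^{(k)}$ is precisely the optimal $z^\star=\frac12\sum_j x_j$ of Lemma \ref{lemma1}, where the lemma holds with equality.

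The main delicacy -- not a deep obstacle, but the place where care is essential -- is the bookkeeping of which quantities are frozen at the previous iterate and which vary with $\mathbf T$: $\mathbf u_{\min}^{(k)}$ and $z^{(k)}$ are constants, whereas $\mathbf P_{ij},\mathbf v_{ij}$ (hence the $x_j$) are functions of the optimization variable. The surrogate dominates $c_i(\mathbf T)$ for \emph{all} $\mathbf T$ precisely because Lemma \ref{lemma1} holds for an arbitrary constant $z$; it becomes tight at $\mathbf T^{(k)}$ only because the two design choices $\mathbf u_{\min}^{(k)}$ and $z^{(k)}$ are tuned so that both inequalities are simultaneously exact at that single point.
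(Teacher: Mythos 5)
Your proposal is correct and follows essentially the same route as the paper's proof: bounding $\lambda_{\min}$ by the Rayleigh quotient at the frozen eigenvector $\mathbf u_{\min}^{(k)}$, then linearizing the coupled squared sum $\big(\sum_j {\mathbf u_{\min}^{(k)}}^T\mathbf v_{ij}\big)^2$ via Lemma \ref{lemma1} with $z=z^{(k)}$, and checking that both inequalities are tight at $\mathbf T^{(k)}$ by the choice of $\mathbf u_{\min}^{(k)}$ and $z^{(k)}$. No gaps; your remark on which quantities are frozen versus varying is exactly the point the paper's derivation relies on.
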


\begin{proof}
See Appendix \ref{proof.theorem1}
\end{proof}

{Based on Theorem \ref{theorem.surrogate_cost_function}, the cost function $c_M(\mathbf T | \mathbf T^{(k)})$ for the entire bundle adjustment problem is obtained by summing (\ref{eq:surrogate}) corresponding to each feature:
\begin{align} \label{eq:surrogate_total}
    c_M = \sum_{i=1}^{M_f} c_{M_i}
\end{align}}

\subsection{First and Second Order Derivatives}
Similar to \cite{liu2023efficient}, we employed a Levenberg-Marquardt optimizer to minimize the cost function which relies on the first-order and second-order derivatives to determine the optimal iteration step. Thus, the first-order and the second-order derivatives (the Jacobian and Hessian matrix) are required. In this section, we derive the analytic expression of such Jacobian and Hessian matrices.
To guarantee the consistency of our method and the previous approach \cite{liu2023efficient}, we utilize the SE(3) perturbation to calculate the derivatives following \cite{liu2023efficient}:
\begin{align}\label{eq:input_perturbation}
    \mathbf T \boxplus \delta \mathbf T &\triangleq (\cdots, \mathbf T_j \boxplus \delta \mathbf T_j, \cdots), \; \nonumber \\
    \mathbf T_j \boxplus \delta \mathbf T_j & \triangleq (\exp{(\lfloor \delta \boldsymbol{\phi}_j \rfloor)} \mathbf R_j, \delta \mathbf t_j + \exp{(\lfloor \delta \boldsymbol{\phi}_j \rfloor)} \mathbf t_j )
\end{align}
where $\delta \mathbf T \triangleq (\cdots, \delta \mathbf T_j, \cdots) \in \mathbb{R}^{6M_p}$ with $\delta \mathbf T_j \triangleq (\delta \boldsymbol \phi_j, \delta \mathbf t_j) \in \mathbb{R}^6, \forall j, $ is the perturbation on the pose vector. 
{Let $\left( \frac{\partial c_M(\mathbf{T} \mid \mathbf{T}^{(k)})}{\partial \mathbf{T}} \right)(\mathbf{T}_0)$ and $\left( \frac{\partial^2 c_M(\mathbf{T} \mid \mathbf{T}^{(k)})}{\partial \mathbf{T}^2} \right)(\mathbf{T}_0)$ denote the first-order and second-order derivatives of $c_M(\cdot)$, respectively, evaluated at the chosen point $\mathbf{T}_0$. The $\boxplus$ operation allows us to parameterize the input $\mathbf{T}$ of the function $c_M(\cdot)$ in terms of a perturbation $\delta \mathbf{T}$ around a given base point $\mathbf{T}_0$, such that $\mathbf{T} = \mathbf{T}_0 \boxplus \delta \mathbf{T}$. Consequently, the derivatives can be expressed as:
\footnotesize
\begin{align} \label{eq.derivatives_reform}
    &\left( \frac{\partial c_M(\mathbf T | \mathbf T^{(k)})}{\partial \mathbf T} \right) (\mathbf T_0) \triangleq \left(\frac{\partial c_M(\mathbf T_0 \boxplus \delta \mathbf T| \mathbf T^{(k)})}{\partial \delta \mathbf T } \right) ( \mathbf 0) \; \nonumber \\
    &\left( \frac{\partial^2 c_M(\mathbf T | \mathbf T^{(k)})}{\partial \mathbf T^2} \right) (\mathbf T_0) \triangleq \left( \frac{\partial }{\partial \delta \mathbf T } {\left( \frac{\partial c_M(\mathbf T_0 \boxplus \delta \mathbf T | \mathbf T^{(k)})}{\partial \delta \mathbf T} \right)}  \right) (\mathbf 0) \; \nonumber \\
    &\quad \quad \quad \quad \quad \quad \quad \quad \quad \quad \forall \mathbf T_0 \in SE(3) \times \cdots \times SE(3).
\end{align}
\normalsize
}
Based on (\ref{eq:input_perturbation}) and (\ref{eq.derivatives_reform}), The Jacobian and Hessian matrices are derived in Theorem \ref{theorem.jandh}.

\begin{theorem} \label{theorem.jandh}
    Given
\begin{itemize}
    \item[(1)] Poses $\mathbf T_j \in SE(3), j = 1, \cdots, M_p$ with the value at the $k$-th iteration denoted as $\mathbf T_j^{(k)}$; and
    \item[(2)] Upper surrogate cost function $c_{M_i}(\mathbf T|\mathbf T^{(k)})$ mentioned in (\ref{eq:surrogate});
\end{itemize}
the Jacobian matrix $\mathbf J_i$ and Hessian matrix $\mathbf H_i$ of the function $c_{M_i}(\mathbf T|\mathbf T^{(k)})$ with respect to the poses $\mathbf T$ are
\begin{align}
\label{eq.jacob_and_hess}
    \mathbf J_i &= \begin{bmatrix}
        \mathbf J_{i0} & \mathbf J_{i1} & \cdots &  \mathbf J_{iM_p}
    \end{bmatrix} \; \nonumber \\
   \mathbf H_i &= {\mathrm{diag}}(\mathbf H_{i1}, \mathbf H_{i2}, \cdots , \mathbf H_{iM_p})
\end{align}
where
\begin{align}
    \mathbf J_{ij} =& \begin{bmatrix}
        \mathbf g_\phi & \mathbf g_t
    \end{bmatrix} \; \nonumber \\
    \mathbf H_{ij} =& \begin{bmatrix}
        \mathbf h_{\phi \phi} & \mathbf h_{\phi t} \\
        \mathbf h_{\phi t}^T & \mathbf h_{tt}
    \end{bmatrix}
\end{align}
and
\begin{align}
    {\mathbf g_{\phi}} =& \frac{2}{N_i} {\mathbf u_{\text{min}}^{(k)}}^T {\mathbf R_j} \mathbf P_{f_{ij}} {\mathbf R_j}^T \lfloor \mathbf u_{\text{min}}^{(k)} \rfloor + 
    \frac{4z^{(k)}}{N_i^2} {\mathbf u_{\text{min}}^{(k)}}^T  \lfloor {\mathbf R_j} \mathbf v_{f_{ij}} \rfloor \; \nonumber \\
    &+ \frac{2}{N_i} {\mathbf u_{\text{min}}^{(k)}}^T \mathbf t_j {\mathbf v_{f_{ij}}}^T {\mathbf R_j}^T \lfloor \mathbf u_{\text{min}}^{(k)} \rfloor \; \nonumber \\
    &+ \frac{2}{N_i} {\mathbf u_{\text{min}}^{(k)}}^T \mathbf R_j \mathbf v_{f_{ij}} {\mathbf t_j}^T \lfloor \mathbf u_{\text{min}}^{(k)} \rfloor \; \nonumber \\
    &+ \frac{2 N_{ij}}{N_i}  {\mathbf u_{\text{min}}^{(k)}}^T \mathbf t_j {\mathbf t_j}^T \lfloor {\mathbf u_{\text{min}}^{(k)}} \rfloor - \frac{4z^{(k)} N_{ij}}{N_i^2} { \mathbf t_j }^T \lfloor \mathbf u_{\text{min}}^{(k)} \rfloor \; \nonumber \\
    {\mathbf g_t} =& \frac{2}{N_i} {\mathbf u_{\text{min}}^{(k)}}^T \mathbf R_j \mathbf v_{f_{ij}} {\mathbf u_{\text{min}}^{(k)}}^T \; \nonumber \\
    &+ \frac{2 N_{ij}}{N_i} {\mathbf t_j}^T {\mathbf u_{\text{min}}^{(k)}} {\mathbf u_{\text{min}}^{(k)}}^T  - \frac{4z^{(k)}N_{ij}}{N_i^2} {\mathbf u_{\text{min}}^{(k)}}^T \; \nonumber \\
    {\mathbf h_{\phi \phi}} =& -\frac{2}{N_i} \lfloor \mathbf u_{\text{min}}^{(k)} \rfloor {\mathbf R_j} \mathbf P_{f_{ij}} {\mathbf R_j}^T \lfloor \mathbf u_{\text{min}}^{(k)} \rfloor \; \nonumber \\
    & + \frac{1}{N_i} \lfloor \mathbf u_{\text{min}}^{(k)} \rfloor \lfloor \mathbf R_j \mathbf P_{f_{ij}} {\mathbf R_j}^T \mathbf u_{\text{min}}^{(k)} \rfloor \; \nonumber \\
    &+ \frac{1}{N_i} \lfloor \mathbf R_j \mathbf P_{f_{ij}} {\mathbf R_j}^T \mathbf u_{\text{min}}^{(k)} \rfloor \lfloor \mathbf u_{\text{min}}^{(k)} \rfloor \; \nonumber \\ 
    &- \frac{2 z^{(k)}}{N_i^2} \lfloor \mathbf u_{\text{min}}^{(k)} \rfloor \lfloor \mathbf R_j \mathbf v_{f_{ij}} \rfloor - \frac{2 z^{(k)}}{N_i^2} \lfloor \mathbf R_j \mathbf v_{f_{ij}} \rfloor \lfloor \mathbf u_{\text{min}}^{(k)} \rfloor \; \nonumber \\
    &+ \frac{2}{N_i} \lfloor {\mathbf u_{\text{min}}^{(k)}} \rfloor \lfloor \mathbf R_j \mathbf v_{f_{ij}} {\mathbf t_j}^T \mathbf u_{\text{min}}^{(k)} \rfloor \; \nonumber \\
    &+ \frac{2}{N_i} \lfloor \mathbf t_j {\mathbf v_{f_{ij}}}^T {\mathbf R_j}^T \mathbf u_{\text{min}}^{(k)} \rfloor \lfloor {\mathbf u_{\text{min}}^{(k)}} \rfloor \; \nonumber \\
    &- \frac{4}{N_i} \lfloor {\mathbf u_{\text{min}}^{(k)}} \rfloor \mathbf R_j \mathbf v_{f_{ij}} {\mathbf t_j}^T \lfloor \mathbf u_{\text{min}}^{(k)} \rfloor \; \nonumber \\
    & -\frac{2 N_{ij}}{N_i} \lfloor \mathbf t_j \rfloor \mathbf u_{\text{min}}^{(k)} {\mathbf u_{\text{min}}^{(k)}}^T \lfloor \mathbf t_j \rfloor \; \nonumber \\
    &+ \frac{N_{ij}}{N_i} \lfloor \mathbf t_j \rfloor \lfloor \mathbf u_{\text{min}}^{(k)} {\mathbf u_{\text{min}}^{(k)}}^T \mathbf t_j \rfloor \; \nonumber \\
    &+ \frac{N_{ij}}{N_i} \lfloor \mathbf u_{\text{min}}^{(k)} {\mathbf u_{\text{min}}^{(k)}}^T \mathbf t_j \rfloor \lfloor \mathbf t_j \rfloor- \frac{4 z^{(k)} N_{ij}}{N_i^2} \lfloor \mathbf u_{\text{min}}^{(k)} \rfloor \lfloor \mathbf t_j \rfloor \; \nonumber \\
    {\mathbf h_{tt}} =& \frac{2 N_{ij}}{N_i} {\mathbf u_{\text{min}}^{(k)}} {\mathbf u_{\text{min}}^{(k)}}^T \; \nonumber \\
    {\mathbf h_{\phi t}} =& - \frac{2}{N_i} \lfloor {\mathbf u_{\text{min}}^{(k)}} \rfloor \mathbf R_j \mathbf v_{f_{ij}} {\mathbf u_{\text{min}}^{(k)}}^T + \frac{2 N_{ij}}{N_i} \lfloor \mathbf t_j \rfloor \mathbf u_{\text{min}}^{(k)} {\mathbf u_{\text{min}}^{(k)}}^T\; \nonumber \\
\end{align}
\end{theorem}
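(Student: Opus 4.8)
The plan is to exploit that $\mathbf{u}_{\min}^{(k)}$ and $z^{(k)}$ are frozen at the $k$-th iterate, hence constant with respect to the optimization variable $\mathbf{T}$; only $\mathbf{P}_{ij}$ and $\mathbf{v}_{ij}$ in (\ref{eq:surrogate}) retain any dependence on $\mathbf{T}$. Writing $\mathbf{u}\triangleq\mathbf{u}_{\min}^{(k)}$, the surrogate splits as $c_{M_i}(\mathbf{T}\mid\mathbf{T}^{(k)}) = \sum_{j=1}^{M_p} f_j(\mathbf{T}_j) + \mathrm{const}$, where $f_j(\mathbf{T}_j) = \frac{1}{N_i}\mathbf{u}^T\mathbf{P}_{ij}\mathbf{u} - \frac{4z^{(k)}}{N_i^2}\mathbf{u}^T\mathbf{v}_{ij}$ involves the single pose $\mathbf{T}_j$ only. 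This separability is the heart of the claim: every mixed second derivative with $j\neq j'$ vanishes identically, so the Hessian is block-diagonal, $\mathbf{H}_i = \mathrm{diag}(\mathbf{H}_{i1},\dots,\mathbf{H}_{iM_p})$, and the Jacobian is the row concatenation of the per-pose blocks $\mathbf{J}_{ij}$, exactly as in (\ref{eq.jacob_and_hess}). No further argument is needed to kill the off-diagonal blocks.

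It then suffices to compute, for a fixed $j$, the gradient $\mathbf{J}_{ij}=[\mathbf{g}_\phi\;\mathbf{g}_t]$ and the block $\mathbf{H}_{ij}$ under the $\boxplus$-parameterization of (\ref{eq.derivatives_reform}). I would substitute the perturbed pose (\ref{eq:input_perturbation}) into $\mathbf{P}_{ij},\mathbf{v}_{ij}$ from (\ref{feature-pc}) and expand with the truncated exponential $\exp(\lfloor\delta\boldsymbol{\phi}_j\rfloor) = \mathbf{I}+\lfloor\delta\boldsymbol{\phi}_j\rfloor+\tfrac12\lfloor\delta\boldsymbol{\phi}_j\rfloor^2 + O(\|\delta\boldsymbol{\phi}_j\|^3)$. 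Collecting the coefficient of the linear part of $f_j(\mathbf{T}_j\boxplus\delta\mathbf{T}_j)$ in $(\delta\boldsymbol{\phi}_j,\delta\mathbf{t}_j)$ yields $\mathbf{g}_\phi,\mathbf{g}_t$, while the symmetric matrix of quadratic coefficients (under the usual $\tfrac12\delta\mathbf{T}_j^T\mathbf{H}_{ij}\delta\mathbf{T}_j$ convention, which supplies the factors of $2$) yields $\mathbf{h}_{\phi\phi},\mathbf{h}_{\phi t},\mathbf{h}_{tt}$. The raw expansion is reduced to the stated skew-matrix forms via $\lfloor\mathbf{a}\rfloor\mathbf{b} = -\lfloor\mathbf{b}\rfloor\mathbf{a}$, $\lfloor\mathbf{a}\rfloor^T=-\lfloor\mathbf{a}\rfloor$, and $\lfloor\mathbf{a}\rfloor\lfloor\mathbf{b}\rfloor = \mathbf{b}\mathbf{a}^T-(\mathbf{a}^T\mathbf{b})\mathbf{I}$; symmetry of $\mathbf{A}\triangleq\mathbf{R}_j\mathbf{P}_{f_{ij}}\mathbf{R}_j^T$ is precisely what lets the two first-order pieces of the rotation-only term merge into the single coefficient $2$ appearing in the leading entry of $\mathbf{g}_\phi$.

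Because the four summands of $\mathbf{P}_{ij}$ carry different degrees in $(\mathbf{R}_j,\mathbf{t}_j)$, I would treat them termwise. The rotation-only term becomes a conjugation $\mathbf{u}^T\mathbf{E}\mathbf{A}\mathbf{E}^{-1}\mathbf{u}$ with $\mathbf{E}\triangleq\exp(\lfloor\delta\boldsymbol{\phi}_j\rfloor)$, whose second-order part $\tfrac12\lfloor\delta\boldsymbol{\phi}_j\rfloor^2\mathbf{A}-\lfloor\delta\boldsymbol{\phi}_j\rfloor\mathbf{A}\lfloor\delta\boldsymbol{\phi}_j\rfloor+\tfrac12\mathbf{A}\lfloor\delta\boldsymbol{\phi}_j\rfloor^2$ produces the first three $\mathbf{h}_{\phi\phi}$ terms; the cross terms $\mathbf{R}_j\mathbf{v}_{f_{ij}}\mathbf{t}_j^T$ (plus transpose) and the translation-quadratic term $N_{ij}\mathbf{t}_j\mathbf{t}_j^T$ receive the rotational perturbation on both $\mathbf{R}_j$ and $\mathbf{t}_j$ and supply the remaining $\mathbf{h}_{\phi\phi}$, $\mathbf{h}_{\phi t}$, $\mathbf{h}_{tt}$ entries, while the $z^{(k)}$-weighted linear term $\mathbf{u}^T\mathbf{v}_{ij}$ feeds $\mathbf{g}_\phi,\mathbf{g}_t$ and, through $\mathbf{E}\mathbf{R}_j\mathbf{v}_{f_{ij}}$ and $N_{ij}\mathbf{E}\mathbf{t}_j$, also $\mathbf{h}_{\phi\phi}$. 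The main obstacle is exactly this $\mathbf{h}_{\phi\phi}$ bookkeeping: one must correctly combine the direct second-order contributions (from $\tfrac12\lfloor\delta\boldsymbol{\phi}_j\rfloor^2$ acting on each factor) with the products of two first-order contributions that arise wherever $\mathbf{R}_j$ and $\mathbf{t}_j$ are perturbed simultaneously, and then symmetrize, verifying that everything collapses into the listed closed form. Since the separability argument already settles the block-diagonal structure, the entire remaining effort---routine but error-prone---lies in these single-block second-order coefficients.
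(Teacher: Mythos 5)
Your proposal is correct and follows essentially the same route as the paper's proof: first use the fact that $c_{M_i}(\mathbf T\mid\mathbf T^{(k)})$ separates into per-pose terms (with $\mathbf u_{\min}^{(k)}$ and $z^{(k)}$ frozen) to conclude that all mixed second derivatives vanish and the Hessian is block-diagonal, then obtain each block by substituting the $\boxplus$-perturbation of (\ref{eq:input_perturbation}) with the second-order expansion of $\exp(\lfloor\delta\boldsymbol\phi_j\rfloor)$ and reading off the linear and quadratic coefficients termwise. The paper organizes the per-pose term into three named pieces $g(\mathbf R_j)$, $h(\mathbf R_j,\mathbf t_j)$, $l(\mathbf t_j)$ rather than expanding the four summands of $\mathbf P_{ij}$ individually, but this is only a bookkeeping difference.
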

\begin{proof}
    See Appendix \ref{proof.theorem2}.
\end{proof}
{Based on Theorem \ref{theorem.jandh}, the Jacobian matrix $\mathbf{J}$ and the Hessian matrix $\mathbf{H}$ for the entire bundle adjustment problem are obtained by summing the Jacobian and Hessian matrices corresponding to each feature:
\begin{align}
    \mathbf J = \sum_{i=1}^{M_f} \mathbf J_i; \qquad \mathbf H = \sum_{i=1}^{M_f} \mathbf H_i
\end{align}
Since each $\mathbf H_i$ is block diagonal, the $\mathbf H$ is also block diagonal matrix.}

\begin{corollary} \label{sec.corollary1}
{Given
\begin{itemize}
    \item[(1)] Original cost function $c(\mathbf{T})$ mentioned in (\ref{BA-formulation-reduced-reduced}); and
    \item[(2)] Upper surrogate cost function $c_{M}(\mathbf T|\mathbf T^{(k)})$ mentioned in (\ref{eq:surrogate_total});
\end{itemize}
Their first-order derivatives follow that:
\begin{align} \label{eq.jacob_eq}
    \left( \frac{\partial c_M(\mathbf T|\mathbf T^{(k)})}{\partial \mathbf T} \right) \left( \mathbf T^{(k)} \right) =  \left( \frac{\partial c(\mathbf T)}{\partial \mathbf T} \right) \left( \mathbf T^{(k)} \right)
\end{align}}
\end{corollary}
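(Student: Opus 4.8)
The plan is to prove (\ref{eq.jacob_eq}) from the \emph{first-order tangency} that every majorizing surrogate enjoys at its touching point, working in the Euclidean chart furnished by the $\boxplus$ parameterization in (\ref{eq.derivatives_reform}). By Theorem \ref{theorem.surrogate_cost_function}, each $c_{M_i}$ satisfies both conditions of (\ref{eq.mm_req1}); summing over features as in (\ref{eq:surrogate_total}), the total surrogate inherits $c_M(\mathbf T|\mathbf T^{(k)}) \ge c(\mathbf T)$ for all $\mathbf T$ together with $c_M(\mathbf T^{(k)}|\mathbf T^{(k)}) = c(\mathbf T^{(k)})$. First I would introduce the gap function in perturbation coordinates, $\tilde g(\delta\mathbf T) \triangleq c_M(\mathbf T^{(k)} \boxplus \delta\mathbf T \mid \mathbf T^{(k)}) - c(\mathbf T^{(k)} \boxplus \delta\mathbf T)$, a scalar function on $\mathbb R^{6M_p}$ with $\tilde g(\delta\mathbf T) \ge 0$ everywhere and $\tilde g(\mathbf 0) = 0$.

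Consequently $\delta\mathbf T = \mathbf 0$ is a global minimizer of $\tilde g$ on the open domain $\mathbb R^{6M_p}$, so---provided $\tilde g$ is differentiable there---the interior first-order optimality condition forces $\left( \partial \tilde g / \partial \delta\mathbf T \right)(\mathbf 0) = \mathbf 0$. Unwinding the definition, this reads $\left( \partial c_M(\mathbf T^{(k)} \boxplus \delta\mathbf T \mid \mathbf T^{(k)}) / \partial \delta\mathbf T \right)(\mathbf 0) = \left( \partial c(\mathbf T^{(k)} \boxplus \delta\mathbf T) / \partial \delta\mathbf T \right)(\mathbf 0)$, which by the manifold-derivative convention (\ref{eq.derivatives_reform}) is precisely the claimed identity (\ref{eq.jacob_eq}). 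The whole argument therefore reduces to establishing differentiability of the two summands at the base point.

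Differentiability of $c_M$ is already delivered by Theorem \ref{theorem.jandh}, which exhibits its Jacobian in closed form; indeed (\ref{eq:surrogate}) is a smooth (in fact quadratic) function of the pose entries once the eigenvector $\mathbf u_{\min}^{(k)}$ and scalar $z^{(k)}$ are frozen at their iteration-$k$ values. The delicate summand is the original cost $c(\mathbf T) = \sum_i \lambda_{\min}(A_i)$ with $A_i \triangleq \frac{1}{N_i}\mathbf P_i - \frac{1}{N_i^2}\mathbf v_i \mathbf v_i^T$, and I expect this to be the main obstacle: $\lambda_{\min}$ of a symmetric matrix is Fr\'echet-differentiable only where the smallest eigenvalue is \emph{simple}, in which case the classical perturbation formula $\partial \lambda_{\min} / \partial(\cdot) = \mathbf u_{\min}^T \left( \partial A_i / \partial(\cdot) \right) \mathbf u_{\min}$ holds with $\mathbf u_{\min}$ the unit eigenvector at $\mathbf T^{(k)}$---which is exactly $\mathbf u_{\min}^{(k)}$. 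Under the standing assumption that each plane feature is non-degenerate (a well-separated minimal eigenvalue), $c$ is differentiable at $\mathbf T^{(k)}$ and the soft argument closes; the degenerate case would only yield equality in a directional/subdifferential sense, and I would flag it as an assumption rather than pursue it.

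As an independent cross-check I would confirm the identity by direct computation: differentiate $c_i = \lambda_{\min}(A_i)$ with the perturbation formula above and verify that the result matches $\mathbf J_i$ of Theorem \ref{theorem.jandh} evaluated at $\mathbf T^{(k)}$ entry by entry. Here the key is that the $-\frac{1}{N_i^2}\mathbf v_i \mathbf v_i^T$ contribution reproduces exactly the $z^{(k)}$-linear terms of (\ref{eq:surrogate}), because Lemma \ref{lemma1} was constructed so that its affine lower bound is tangent to the quadratic at the optimal $z^\star$. This tangency is the algebraic reason the two gradients coincide, and checking it gives assurance beyond the majorization argument alone.
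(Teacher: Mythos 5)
Your proposal is correct and follows essentially the same route as the paper: the paper's Appendix D proves the identity coordinate-by-coordinate by comparing the one-sided difference quotients of $c_M$ and $c$ at $\mathbf T^{(k)}$ (using $c_M \ge c$ everywhere and $c_M(\mathbf T^{(k)}|\mathbf T^{(k)}) = c(\mathbf T^{(k)})$ to get ``$\ge$'' from $\delta\xi \to 0^+$ and ``$\le$'' from $\delta\xi \to 0^-$), which is precisely the interior first-order optimality condition for your nonnegative gap function $\tilde g$ at $\delta\mathbf T = \mathbf 0$. Your explicit caveat that $c(\mathbf T)$ is differentiable at $\mathbf T^{(k)}$ only when each $\lambda_{\min}$ is simple is a hypothesis the paper's proof uses silently, and flagging it is a genuine (if minor) improvement.
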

\begin{proof}
    {See Appendix \ref{proof.corollary1}}
\end{proof}
\begin{remark}
    {The original cost function (\ref{BA-formulation-reduced-reduced}) and the surrogate cost function (\ref{eq:surrogate_total}) share the same Jacobian matrix at $\mathbf T^{(k)}$.}
\end{remark}

\subsection{Second-order Solver}
With analytic Jacobin and Hessian matrices given in Theorem \ref{theorem.jandh}, we then minimize the surrogate cost function (\ref{eq:surrogate}) with the Levenberg-Marquardt algorithm, and the original cost function will be improved indirectly as mentioned in (\ref{eq.mm_2}). Within each iteration, the optimal iteration step $\Delta \mathbf T$ is determined by:
\begin{align}
    \left( \mathbf H + \mu \mathbf I \right) \Delta \mathbf T = -\mathbf J^T,
\end{align}
Note that the Hessian matrix $\mathbf H$ is block diagonal as mentioned in (\ref{eq.jacob_and_hess}). The equation can be further written as:
\begin{align}
    {\mathrm{diag}}(\cdots, \mathbf H_j + \mu \mathbf I, \cdots) \Delta \mathbf T = -\begin{bmatrix}
       \cdots & \mathbf J_j& \cdots
    \end{bmatrix}^T
\end{align}
where $\mathbf J_j \in \mathbb R^6$ and $\mathbf H_j \in \mathbb S^{6\times 6}$ is the Jacobian and Hessian matrices correspond to the $j$-th pose, respectively. Then the equation can be separated into $M_p$ sub-problems:
\begin{align} \label{eq.seperated_linear_equation}
    \left( \mathbf H_j + \mu \mathbf I \right)& \Delta \mathbf T_j = - \mathbf J_j^T; \quad j=1,2,\cdots,M_p
\end{align}
where $\Delta \mathbf T_j$ is the optimal iteration step of the $j$-th pose. Remarkably, each of the sub-problems is only ranked 6 and independent from the other sub-problems, in which case they can be solved parallel in a short time.
And the whole optimization algorithm is illustrated in Algorithm \ref{algo-LM}.
\begin{algorithm} [t]
	\SetAlgoLined
	\caption{LM optimization}\label{algo-LM}
	\SetKwInOut{Input}{Input}
	\Input{Initial poses $\mathbf T^{(0)}$; \\
		   Point clusters in the local frame $\mathbf P_{f_{ij}}, \mathbf v_{f_{ij}}$;}
	$k=0$;\\
	\Repeat{$\parallel \mathbf T^{(k)} \boxminus \mathbf T^{(k-1)} \parallel < \epsilon$ \rm{\textbf{or}} $k \ge k_{\text{max}}$}
	{
            $\mu = 0.01$, $\nu = 2$, $j=0$;\\
		\Repeat{$\parallel \Delta \mathbf T\parallel < \epsilon$ \rm{\textbf{or}} $c_M - c_M' > c - c'$ \rm{\textbf{or}} $j \ge j_{\text{max}}$}
            {
                $j=j+1$; \\
    		$\mathbf J = \mathbf 0_{1 \times 6M_p}, \mathbf H = \mathbf 0_{6M_p \times 6M_p}$;  \\
    		\ForEach{$i \in \{1, \cdots, M_f\}$}
    		{
    			Compute $\mathbf J_i$ and $\mathbf H_i$ from (\ref{eq.jacob_and_hess}); \\
    			$\mathbf J = \mathbf J + \mathbf J_i$; $\mathbf H = \mathbf H + \mathbf H_i$
    		}
    		Solve $(\mathbf H + \mu\mathbf I)\Delta \mathbf T = -\mathbf J^T$ by (\ref{eq.seperated_linear_equation}); \\
    		$\mathbf T' = \mathbf T \boxplus \Delta \mathbf T$; \\
    		Compute current cost $c_M = c_M(\mathbf T | \mathbf T^{(k)})$ and the new cost $c_M' = c_M(\mathbf T'|\mathbf T^{(k)})$ from (\ref{eq:surrogate});\\
                Compute current cost $c = c(\mathbf T)$ and the new cost $c' = c(\mathbf T')$ from (\ref{BA-formulation-reduced-reduced});\\
    		$\rho = (c_M - c_M') / (\frac{1}{2}\Delta \mathbf T \cdot (\mu\Delta \mathbf T - \mathbf J^T ))$; \\
    		\eIf{$\rho > 0$}
    		{
    			$\mathbf T = \mathbf T'$; \\
    			$\mu = \mu*\max{(\frac{1}{3}, 1-(2\rho-1)^3)}$; $\nu = 2$;
    		}
    		{
    			$\mu = \mu * \nu$; $\nu = 2*\nu$;
    		}
            }
            $\mathbf T^{(k+1)}$ = $\mathbf T$; \\
            $k = k+1$
	}
	\KwOut{Final optimized states $\mathbf T^{(k)}$;}
\end{algorithm}

It is worth mentioning that it is unnecessary to iterate the LM optimizer until it converges for every constructed upper surrogate cost function. Limiting the maximum inner iteration number $j_{max}$ in our Algorithm \ref{algo-LM} could lead to less overall optimization time.
Updating the surrogate cost function in each interaction can guarantee the maximum decreasing speed of the original cost function.
\subsection{Complexity Analysis}
In this section, we analyze the time and space complexity of our proposed method compared to BALM2 \cite{liu2023efficient}.

\subsubsection{Time Complexity} We begin by evaluating the time complexity, which primarily arises from two factors: the derivative evaluation process and the linear equation solving process. For the derivative evaluation, since our method remains compatible with point clusters \cite{liu2023efficient}, it avoids the need for enumerating each raw point. The time complexity of this process is proportional to {the total number of plane features observed by all LiDAR scans}, $N_{obs}$, resulting in a time complexity of $O(N_{obs})$, which is the same as BALM2. In the worst case, depending on the sparsity of the BA problem, the time complexity could reach $O(M_p \times M_f)$, where $M_p$ and $M_f$ represent the number of poses and features, respectively. However, in most cases, $N_{obs} \ll M_p \times M_f$. For the linear equation solving process, thanks to the diagonal Hessian matrix, the original equation can be decomposed into $M_p$ sub-problems, each of which has a rank of only 6. This reduces the time complexity of the process to $O(M_p)$, which is significantly lower than the $O(M_p^3)$ complexity of BALM2. Additionally, since all sub-problems are independent, this process can be further accelerated using multi-threading.

\subsubsection{Space Complexity} The space complexity also arises primarily from two factors: the storage of point clusters and the storage of the Hessian matrix during the optimization. The map parameter storage refers to the memory required to store the parameters necessary for calculating the Jacobian and Hessian matrices. Thanks to the point clustering approach, both BALM2 and our method eliminate the need to store raw point clouds, resulting in significant memory savings. The space complexity for storing map parameters is the same as in BALM2, i.e., $O(N_{obs})$. The storage of the Hessian matrix also requires substantial memory as the optimization problem scales. Our method takes advantage of the diagonal structure of the Hessian matrix, requiring only $36M_p$ floating-point values to store the Hessian, leading to a space complexity of $O(M_p)$. This is significantly lower than the $O(M_p^2)$ complexity required by BALM2.

\subsection{Distributed Bundle Adjustment Frameworks} \label{sec.distributed_framework}
In this section, we introduce the proposed distributed bundle adjustment framework as shown in Figure \ref{fig.framework}. The proposed framework is designed to function uniformly across all computing devices, with each device storing a subset of point clouds from the entire bundle adjustment problem.

\begin{figure*} [t]
	\centering
	\includegraphics[width=\linewidth]{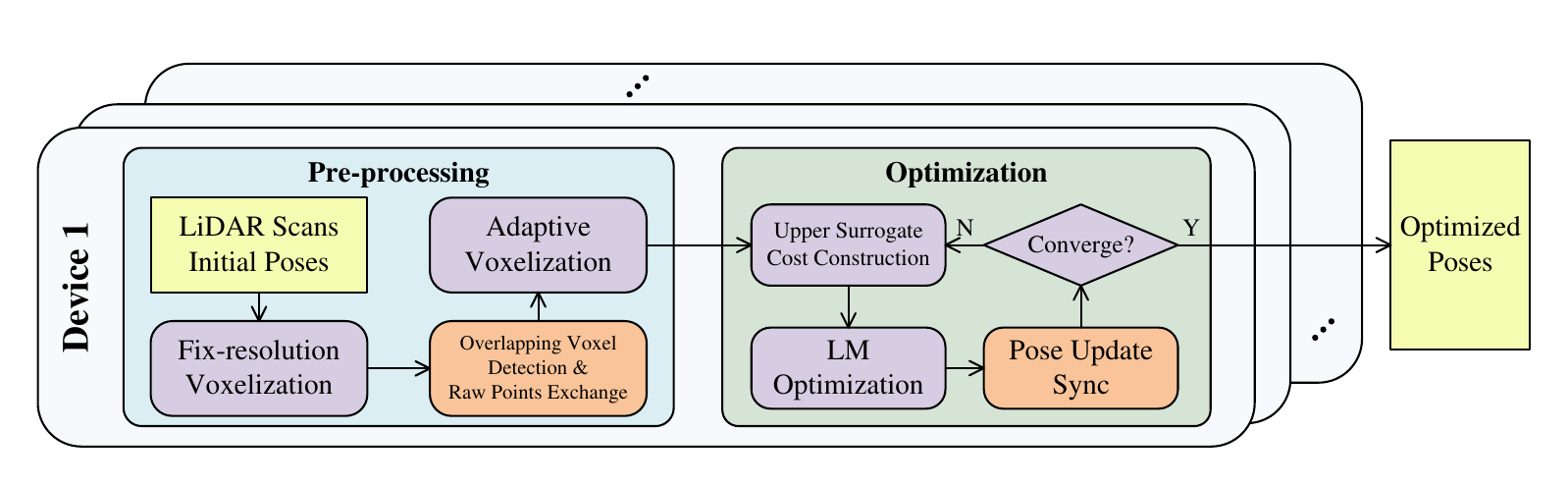}
	\caption{Pipeline of the proposed distributed bundle adjustment framework.}
    \label{fig.framework}
\end{figure*}

The framework consists of two phases: the pre-processing phase, which involves feature extraction and is executed only once, and the optimization phase, which iteratively constructs and refines the upper surrogate cost function.

In the pre-processing phase, a fixed-resolution voxelization process is performed on each device using the initial scan pose estimation, which is typically generated by an odometry algorithm. The positions of the generated voxels are shared among all devices to identify overlapping voxels. Points within these overlapping voxels are exchanged between devices.
Using the exchanged points in the overlapping voxels, each device extracts plane features, generates point clusters, and constructs both the original and surrogate cost functions.

The optimization phase operates similarly to the single-device bundle adjustment process, with the key difference being that each device optimizes only the subset of scan poses it is responsible for. The optimized scan poses are then shared among all devices, enabling the optimization to proceed iteratively across all devices.

During the whole process, our framework involves three types of data communication among devices: overlapping voxel detection, raw points exchange, and the pose update sync (highlighted with orange in Figure \ref{fig.framework}). The overlap detection only shares the voxel positions, which are compact data structures. The raw points exchange involves large data but only needs to be performed within the overlapped regions and is only performed once at the beginning of the whole process. The pose update sync is performed iteratively but only requires minimal network bandwidth. Therefore, the overall communication load and network bandwidth requirements remain within a reasonable range.

\section{Implementations}\label{sec.implementation}
We implemented our proposed method in C++ and tested it on both single-device experiments and distributed bundle adjustment experiments. For the single-device experiments (including the simulation evaluation), we tested our method in Ubuntu 20.04 running on a desktop equipped with an Intel i9-13900KF processor and 64 GB RAM. Since our evaluation baseline, BALM2 \cite{liu2023efficient}, may require substantial memory to store the Hessian matrix, another 128 GB swap space was implemented to avoid memory overflow during the experiment. For the distributed bundle adjustment experiment, we implemented our distributed mapping algorithm with ROS and SFTP communication framework and tested our code on four consumer-level laptops, equipped with different CPUs and sizes of RAM, and connected with a WLAN network, as illustrated in Figure \ref{fig.devices}.
\begin{figure} [ht]
	\centering
	\includegraphics[width=\linewidth]{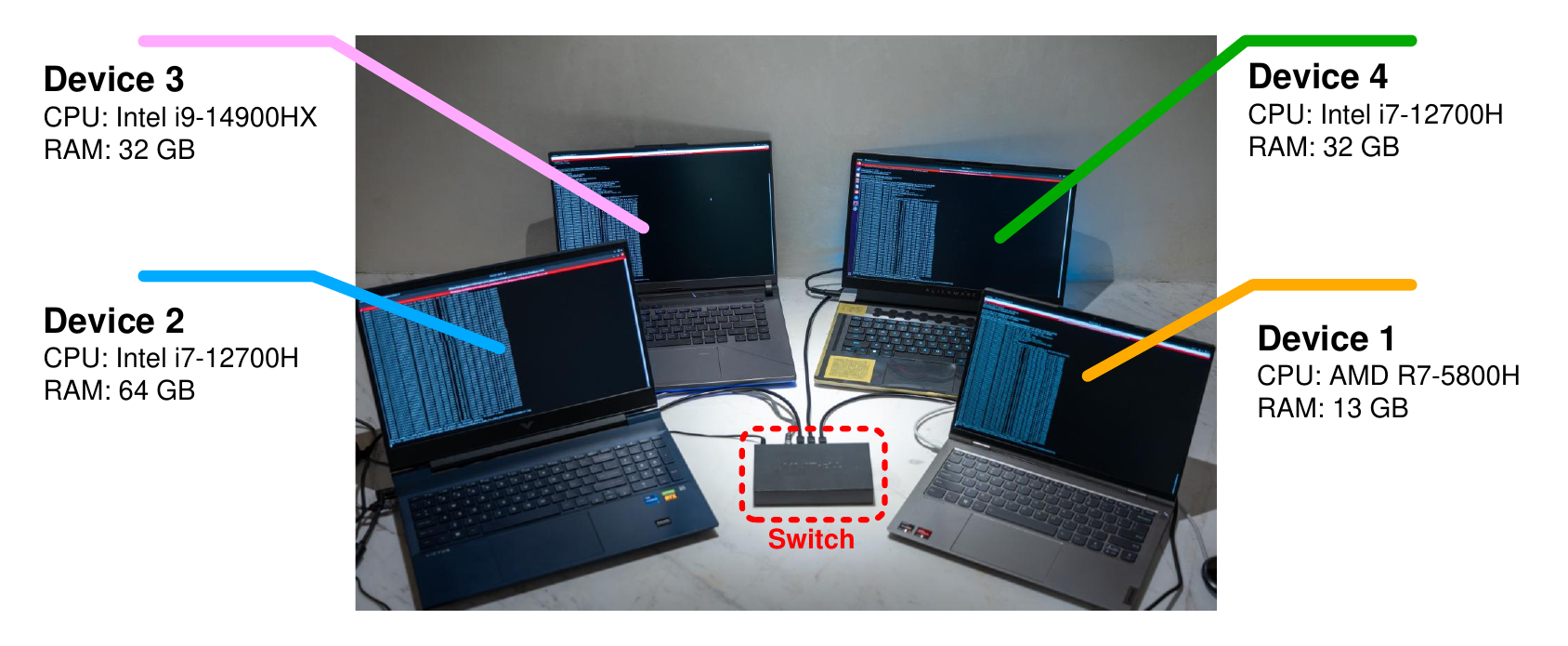}
	\caption{Devices used in distributed bundle adjustment experiment.}
	\label{fig.devices}
\end{figure}

The code implementation details of our optimizer are generally kept the same as \cite{liu2023efficient} except for the linear solver. As the Hessian matrix of the proposed method is block diagonal, the linear equation can be separated into a set of sub-problems. Thus, for the linear solver in the Levenberg-Marquardt optimizer, we utilize the LDLT Cholesky decomposition method to resolve each decoupled sub-problem in (\ref{eq.seperated_linear_equation}). These diagonal block solvers are computed parallel using the \textit{thread-pool} library in C++. The parallel thread number is determined by the available computation core in the CPU (e.g. 32 threads for Intel 13900KF used in the single-device experiment).

\section{Simulation Evaluation} \label{simulation}
\begin{figure} [ht]
	\centering
	\includegraphics[width=\linewidth]{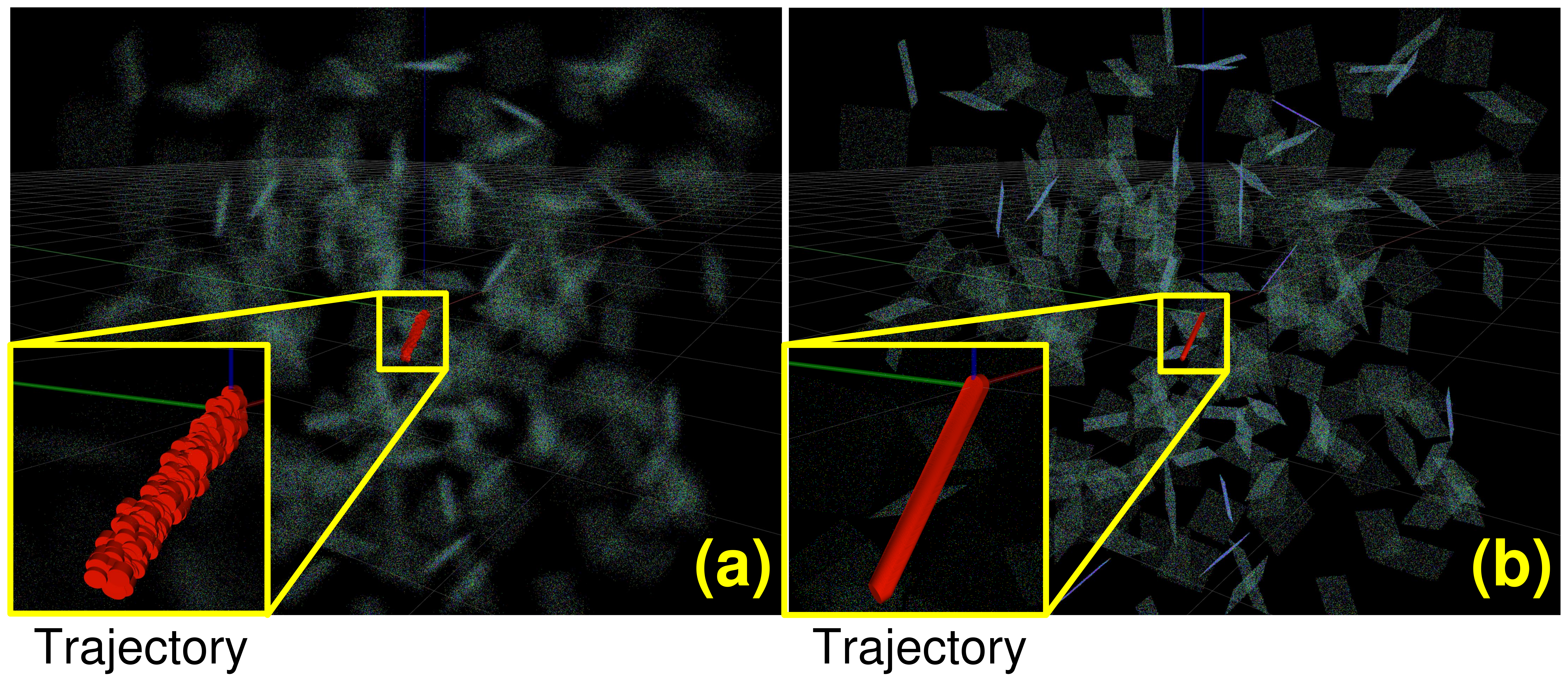}
	\caption{An example of our simulation environment. (a) shows the initial poses and point clouds. (b) shows the optimized one.}
	\label{fig.simulation_env}
\end{figure}
First, we verify the proposed method's convergence, accuracy, and efficiency under an ideal simulation environment. The simulation environment is the same as BALM2 \cite{liu2023efficient}. During the experiment, 200 planes with random centers and normal vectors are generated within a $10m\times 10m\times 10m$ cube space. LiDAR scans are then generated by randomly generating 5 points on each plane for each scan. Then, a random perturbation is added to each scan pose. An example of our simulation environment is shown in Figure \ref{fig.simulation_env}.

To extensively analyze the time complexity of the proposed method and compare it with BALM2, we evaluated BALM2 and the proposed method under different scan numbers from $2^7$ {(i.e., 128)} to $2^{13}$ {(i.e., 8,192)} and repeated the experiment 5 times. Since the time and memory cost of BALM2 when optimizing 8,192 scan poses is extremely large, we did not evaluate it. During optimization, the terminating threshold $\epsilon$ in Algorithm \ref{algo-LM} is set to $10^{-5}$ for both BALM2 and the proposed method.

\subsection{Convergence Evaluation} \label{sec.convergence_evaluation}
We first verify the convergence of the proposed method by comparing the optimized point-to-plane residual, as described in (\ref{BA-formulation-reduced-reduced}).

According to the theory of the majorization-minimization method, our optimization process should converge to the same global minimum residual as directly optimizing the original cost function. Furthermore, since the time complexity of our method has been reduced from cubic to linear, it is expected to demonstrate a significantly faster convergence speed. To verify this, we evaluated and plotted the optimization residual over time, as shown in Figure \ref{fig.convergence_exp}.

\begin{figure} [ht]
	\centering
	\includegraphics[width=\linewidth]{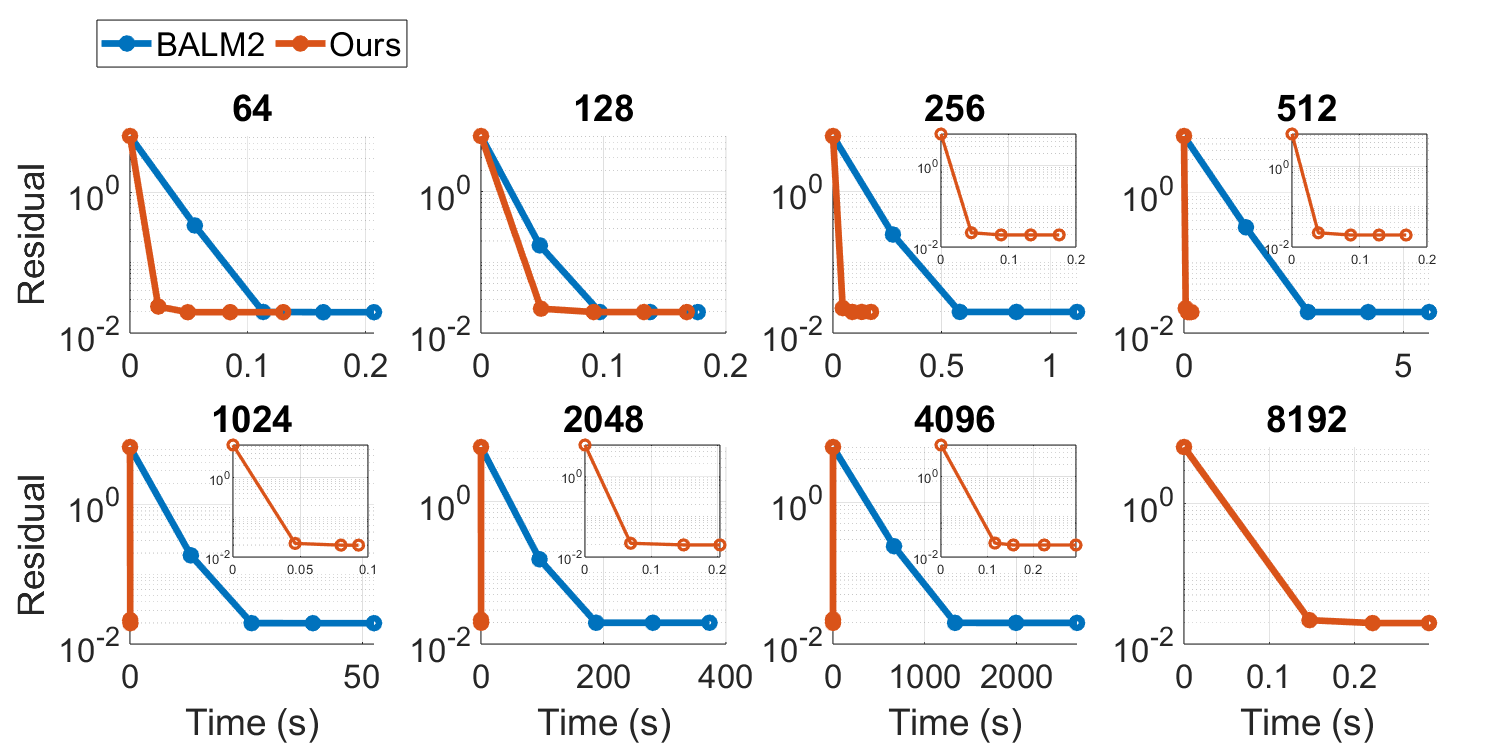}
	\caption{Result of the convergence evaluation under simulation environment. {The results demonstrate that the proposed method converges, at a significantly higher speed, to the same point-to-plane residual as BALM2.}}
	\label{fig.convergence_exp}
\end{figure}

The results confirm that our proposed method consistently converges to the same final residual as BALM2 (average: $9.931 \times 10^{-4} (m)$ for both BALM2 and the proposed method, with a difference of less than $1 \times 10^{-8}$), while requiring significantly less computation time. This advantage becomes even more pronounced as the number of optimized poses increases, aligning with our theoretical analysis.

\subsection{Accuracy Evaluation}
\begin{figure} [ht]
	\centering
	\includegraphics[width=\linewidth]{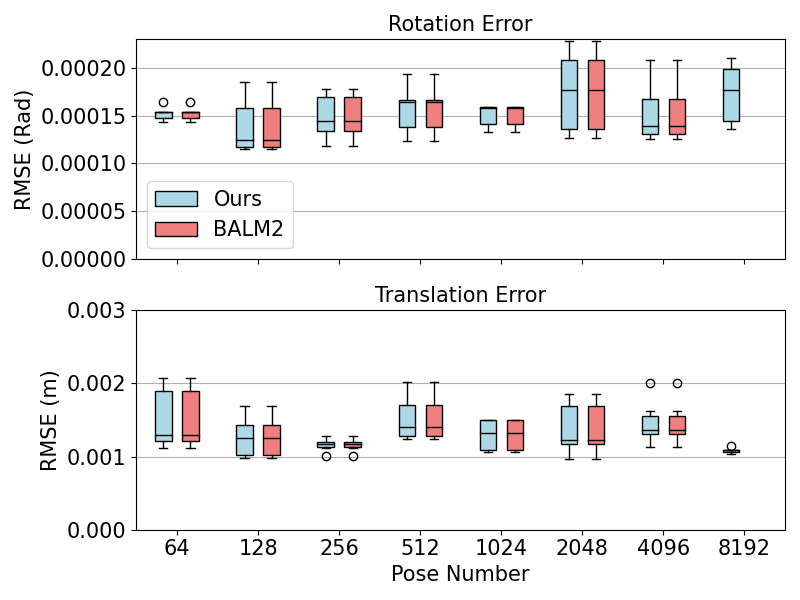}
	\caption{RMSE of rotation and translation error in simulation evaluation.}
	\label{fig.simulation_accuracy}
\end{figure}
We then evaluate and compare the accuracy of the proposed method and BALM2 by analyzing the Root Mean Square Error (RMSE) of the rotation and translation errors. Since the poses estimated by minimizing the point-to-plane residual are unbiased, as shown in \cite{liu2023efficient}, both BALM2 and the proposed method should achieve the same high accuracy when the point-to-plane residual converges to the same global minimum.

The evaluation results are presented in Figure \ref{fig.simulation_accuracy}. The results indicate that our method achieves the same level of accuracy as BALM2, which aligns with the theoretical expectations.

\subsection{Efficiency Evaluation}
\begin{figure} [ht]
	\centering
	\includegraphics[width=\linewidth]{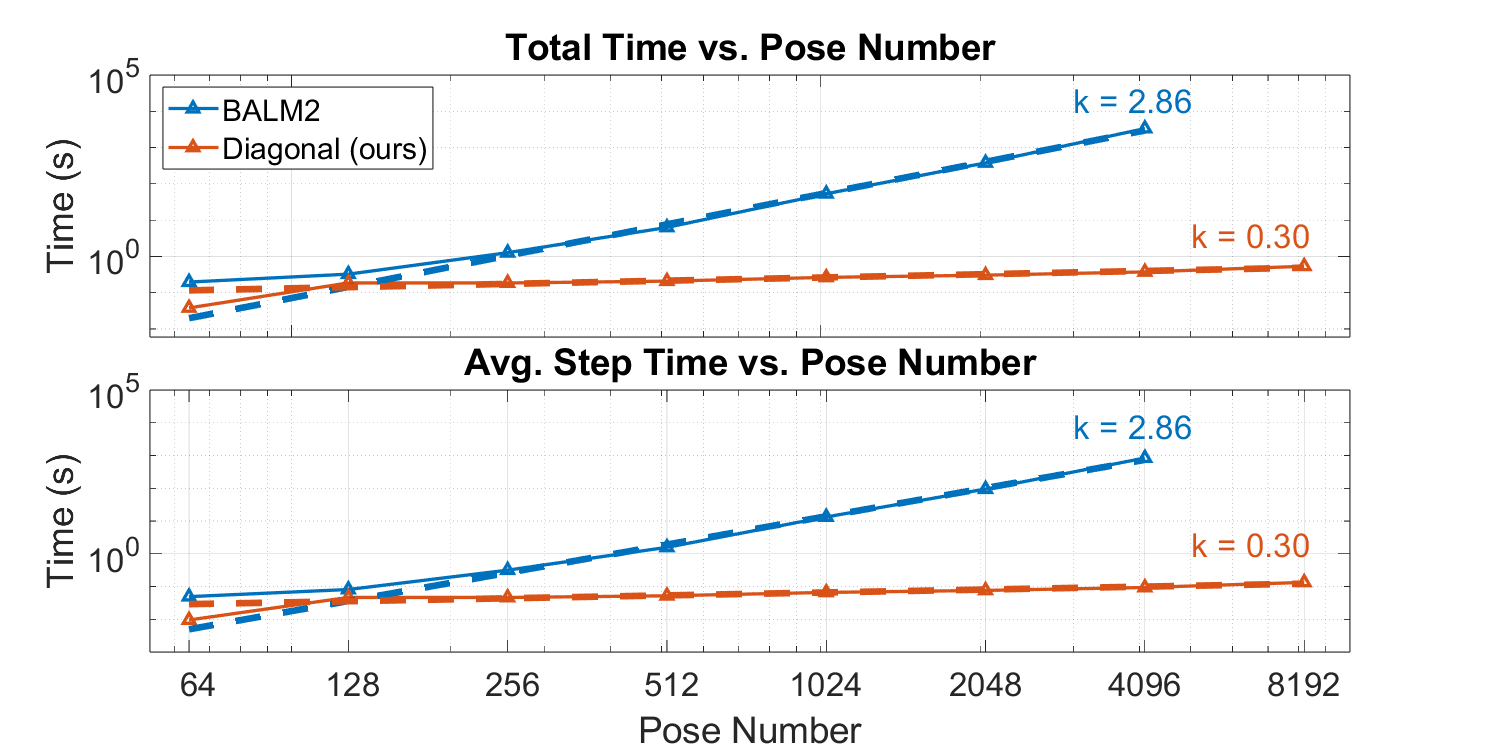}
	\caption{Log-log plot of the {optimization time} against pose number. $k$ expresses the slope of the first-order regression result of the log-log plot. In which case we have $\log(Time(s))=k\log(Pose Number)$.}
	\label{fig.time_cost_summary}
\end{figure}
We lastly compare the efficiency and the time complexity of the proposed method and BALM2 by comparing the optimization time. We summarized the {optimization time} $t$ against the scan pose number $M_p$ with a log-log plot, as shown in Figure \ref{fig.time_cost_summary}. A first-order regression is performed on the log-log plot. The slope $k$ of the regression result implies that
\begin{align}
    \log(t)=k \log(M_p)
\end{align}
and the time complexity of the optimization will be $O(M_p^{k})$. During regression, the optimization time of 64 and 128 poses is not considered, as the high-order terms may not dominate when the pose number is too small, leading to an inaccurate fitting result. The result implies that the time complexity of BALM2 is $O(M_p^{2.86})$, and the time complexity of the proposed method is $O(M_p^{0.30})$. It should be noted that the evaluated time complexity of the proposed method is even less than the theory value which is $O(M_p)$. This is because the time cost of the proposed method is still very small even when optimizing 8,192 poses (i.e. less than 0.5 second, refer to {Figure \ref{fig.convergence_exp}}), and the time for solving the linear equation in LM optimizer may still not dominate. Nevertheless, the lower time complexity of the proposed method is already able to be proved.

\section{Benchmark Evaluation} \label{benchmark}
In this section, extensive experiments are conducted on various public datasets with different kinds of environments, LiDAR types, and platforms to evaluate the accuracy and computational efficiency against the state-of-the-art point cloud bundle adjustment approach.
\subsection{Datasets and Pre-process} \label{sec.bench_preprocess}
We conducted our experiments on four public datasets: \textit{MulRan} \cite{mulran}, \textit{Hilti2022} \cite{hilti}, \textit{HeLiPr} \cite{jung2023helipr}, and \textit{MaRS-LVIG} \cite{marslvig}, each providing a variety of environments, LiDAR types, mounted platforms, and a wide range of pose numbers and map scales.

Among these datasets, the \textit{Hilti2022} dataset primarily features indoor and small-scale outdoor structured environments. The \textit{MulRan} and \textit{HeLiPr} datasets focus on large-scale urban environments. Since both datasets captured point cloud data from multiple LiDAR types, we selected the mechanical rotating Ouster LiDAR for \textit{MulRan} and the solid-state Avia LiDAR for \textit{HeLiPr} to evaluate the proposed method across different fields of view (FoV) and scan patterns. Additionally, the \textit{HeLiPr} dataset contains the longest time duration and trajectory length. The \textit{MaRS-LVIG} dataset is an airborne mapping dataset, offering unique view directions and a wide map range.

We pre-processed the sequences for evaluation and observed that some of them contain scenarios where the LiDAR scans are degenerated, which falls outside the scope of this paper. Consequently, we excluded those sequences from further evaluation. For all processed sequences, we first employed FAST-LIO2 \cite{xu2022fast}, a robust LiDAR-inertial odometry system, to compensate for motion distortion in the LiDAR scans and generate initial pose estimations. We then used BTC \cite{yuan2024btc}, an efficient and robust point cloud descriptor, and GTSAM \cite{gtsam} to perform loop closure on the initial estimations. Finally, the distorted LiDAR scans, along with their corresponding initial pose estimations, were optimized using both BALM2 and the proposed method. Detailed information on the processed sequences is provided in Table \ref{tab:dataset_summary}.

\subsection{Evaluation Setup}
Although the proposed method is highly efficient across all sequences without requiring any point cloud downsampling or keyframe extraction, we observed that BALM2 becomes excessively time-consuming as the number of poses increases. To address this, we extracted one keyframe for every five LiDAR scans and optimized only the poses of the keyframes when evaluating BALM2.
{For a consistent comparison and comprehensive evaluation, we present the optimization results of the proposed method using the same keyframes. Additionally, since our method can well handle large number of LiDAR poses without a problem, we also present the results of our method using all frames without keyframe selection. The two variants of our methods are referred to as {``KF''} and {``AF''}, respectively.}
Moreover, the scan number of sequences in the HeLiPR dataset is over 5000 even after keyframe extraction. When optimizing these sequences, BALM2 still requires a substantial optimization time (more than 2 hours for each iteration).
Thus, we did not evaluate BALM2 on the HeLiPR dataset.

During the evaluation, we observed that the public code of BALM2 contains significant redundant memory usage and lacks optimization for computation time in the feature extraction process (i.e., adaptive voxelization), which considerably limits the scale of the processed data. To address this, we optimized the code for this process. In the evaluation, both BALM2 and the proposed method utilized our optimized feature extraction code. For each dataset, all parameters (e.g., root voxel size, octo-tree layer number, planar eigenratio threshold, etc.) are kept the same to ensure that both methods use the same planar features and point clusters as input.

\begin{table}[]
\centering
\caption{The summary of the datasets}
\label{tab:dataset_summary}
\resizebox{\columnwidth}{!}{%
\begin{tabular}{@{}ccccc@{}}
\toprule
Dataset & Sequence & Pose Number & Length (m) & LiDAR Type \\ \midrule
\multirow{4}{*}{Hilti 2022} & exp 01 & 2273 & 165.80 & \multirow{4}{*}{Pandar 64} \\
 & exp 02 & 4298 & 327.73 &  \\
 & exp 07 & 1319 & 115.05 &  \\
 & exp21 & 1433 & 129.02 &  \\ \midrule
\multirow{9}{*}{Mulran} & DCC 01 & 5539 & 4912.21 & \multirow{9}{*}{Ouster 64} \\
 & DCC 02 & 7557 & 4273.45 &  \\
 & DCC 03 & 7474 & 5421.82 &  \\
 & KAIST 01 & 8222 & 6123.63 &  \\
 & KAIST 02 & 8938 & 5965.20 &  \\
 & KAIST 03 & 8624 & 6249.14 &  \\
 & Riverside 01 & 5533 & 6427.23 &  \\
 & Riverside 02 & 8154 & 6612.67 &  \\
 & Riverside 03 & 10473 & 7248.56 &  \\ \midrule
\multirow{9}{*}{HeLIPR} & Roundabout 01 & 27294 & 9041.73 & \multirow{9}{*}{Avia} \\
 & Roundabout 02 & 20845 & 7449.30 &  \\
 & Roundabout 03 & 25147 & 9262.55 &  \\
 & Town 01 & 24139 & 7830.79 &  \\
 & Town 02 & 26885 & 8203.48 &  \\
 & Town 03 & 25281 & 8899.70 &  \\
 & Bridge 01 & 21462 & 23056.71 &  \\
 & Bridge 02 & 25615 & 14611.60 &  \\
 & Bridge 03 & 20082 & 19397.79 &  \\ \midrule
\multirow{18}{*}{MaRS-LVIG} & AMtown 01 & 12761 & 4635.24 & \multirow{18}{*}{Avia} \\
 & AMtown 02 & 6299 & 4929.88 &  \\
 & AMtown 03 & 4934 & 4752.61 &  \\
 & AMvalley 01 & 10716 & 3999.13 &  \\
 & AMvalley 02 & 6461 & 4019.40 &  \\
 & AMvalley 03 & 4646 & 4011.83 &  \\
 & HKairport 01 & 7248 & 1880.31 &  \\
 & HKairport 02 & 3788 & 1885.10 &  \\
 & HKairport 03 & 3168 & 1891.93 &  \\
 & HKairport GNSS 01 & 7094 & 1887.25 &  \\
 & HKairport GNSS 02 & 3801 & 1953.94 &  \\
 & HKairport GNSS 03 & 3119 & 1911.81 &  \\
 & HKisland 01 & 6593 & 1686.67 &  \\
 & HKisland 02 & 3780 & 1697.54 &  \\
 & HKisland 03 & 2876 & 1690.01 &  \\
 & HKisland GNSS 01 & 6853 & 1734.86 &  \\
 & HKisland GNSS 02 & 3811 & 1724.86 &  \\
 & HKisland GNSS 03 & 3035 & 1717.70 &  \\ \bottomrule
\end{tabular}%
}
\end{table}

\begin{table*}[]
\centering
\caption{The evaluation result of our benchmark experiment.}
\label{tab:benchmark_result}
\resizebox{\textwidth}{!}{%
\begin{tabular}{@{}cccccccccccc@{}}
\toprule
\multirow{2}{*}{Dataset} & \multirow{2}{*}{Sequence} & \multicolumn{4}{c}{RMSE of APE (m)} & \multicolumn{3}{c}{Time consumption (s)} & \multicolumn{3}{c}{Peak Memory usage (GB)} \\ \cmidrule(l){3-6} \cmidrule(l){7-9} \cmidrule(l){10-12} 
 &  & FAST-LIO & BALM2 & Ours ({KF$^1$}) & Ours ({AF$^2$}) & BALM2 & Ours ({KF}) & Ours ({AF}) & BALM2 & Ours ({KF}) & Ours ({AF}) \\ \midrule
\multirow{4}{*}{Hilti 2022} & exp 01 & 0.033 & \textbf{0.032} & \textbf{0.032} & 0.031 & 4.77 & \textbf{1.33} & 3.48 & 1.5 & \textbf{1.0} & 4.2 \\
 & exp 02 & 0.034 & \textbf{0.030} & \textbf{0.030} & 0.029 & 76.14 & \textbf{5.06} & 5.26 & 3.1 & \textbf{1.4} & 6.1 \\
 & exp 07 & 0.060 & \textbf{0.054} & 0.056 & 0.059 & \textbf{2.33} & 3.89 & 0.96 & 0.5 & \textbf{0.3} & 1.2 \\
 & exp 21 & \textbf{0.044} & 0.046 & 0.045 & 0.045 & \textbf{1.90} & 2.35 & 2.44 & 0.8 & \textbf{0.6} & 2.5 \\ \midrule
\multirow{9}{*}{MulRan} & DCC01 & 5.467 & 5.718 & \textbf{5.342} & 5.352 & 96.93 & \textbf{2.67} & 6.22 & 4.8 & \textbf{1.4} & 6.6 \\
 & DCC02 & 2.998 & 3.325 & \textbf{2.907} & 2.932 & 731.23 & \textbf{2.92} & 7.35 & 8.0 & \textbf{1.9} & 9.0 \\
 & DCC03 & 2.196 & 2.598 & \textbf{2.074} & 2.076 & 396.32 & \textbf{3.03} & 8.15 & 7.9 & \textbf{1.9} & 9.0 \\
 & KAIST01 & 3.371 & 3.396 & \textbf{3.354} & 3.338 & 2312.43 & \textbf{4.52} & 8.04 & 9.1 & \textbf{1.9} & 9.0 \\
 & KAIST02 & 3.421 & 3.937 & \textbf{3.409} & 3.411 & 285.54 & \textbf{3.13} & 7.87 & 10.5 & \textbf{2.1} & 9.9 \\
 & KAIST03 & 2.943 & 3.217 & \textbf{2.937} & 2.932 & 305.50 & \textbf{3.03} & 8.15 & 10.0 & \textbf{2.1} & 9.9 \\
 & Riverside01 & 8.435 & 8.450 & \textbf{8.420} & 8.421 & 150.93 & \textbf{2.62} & 5.45 & 4.4 & \textbf{1.1} & 5.1 \\
 & Riverside02 & 11.434 & 11.443 & \textbf{11.424} & 11.439 & 524.94 & \textbf{2.73} & 8.30 & 8.6 & \textbf{1.7} & 7.7 \\
 & Riverside03 & 13.084 & \textbf{12.949} & 13.065 & 13.071 & 1015.64 & \textbf{3.25} & 10.25 & 13.1 & \textbf{2.0} & 9.7 \\ \midrule
\multirow{9}{*}{HeLiPR} & Roundabout 01 & 2.085 & --$^3$ & \textbf{1.665} & 1.722 & -- & \textbf{14.04} & 152.04 & -- & \textbf{3.1} & 10.5 \\
 & Roundabout 02 & 1.849 & -- & \textbf{1.539} & 1.471 & -- & \textbf{7.94} & 61.42 & -- & \textbf{2.4} & 12.6 \\
 & Roundabout 03 & 2.493 & -- & \textbf{1.971} & 1.870 & -- & \textbf{14.76} & 185.43 & -- & \textbf{2.9} & 9.8 \\
 & Town 01 & \textbf{4.462} & -- & 4.645 & 4.438 & -- & \textbf{5.92} & 14.60 & -- & \textbf{2.7} & 14.8 \\
 & Town 02 & 3.515 & -- & \textbf{3.357} & 3.325 & -- & \textbf{5.37} & 69.71 & -- & \textbf{3.0} & 11.0 \\
 & Town 03 & 4.688 & -- & \textbf{4.514} & 4.596 & -- & \textbf{7.49} & 27.42 & -- & \textbf{2.8} & 13.5 \\
 & Bridge 01 & \textbf{10.022} & -- & 11.823 & 11.186 & -- & \textbf{4.33} & 52.29 & -- & \textbf{2.3} & 12.7 \\
 & Bridge 02 & \textbf{6.423} & -- & 6.440 & 6.440 & -- & \textbf{4.05} & 19.78 & -- & \textbf{2.7} & 14.4 \\
 & Bridge 03 & \textbf{10.461} & -- & 10.469 & 10.461 & -- & \textbf{4.08} & 22.15 & -- & \textbf{2.1} & 13.3 \\ \midrule
\multirow{18}{*}{MaRS-LVIG} & AMtown 01 & 1.720 & \textbf{1.113} & 1.508 & 1.497 & 3857.27 & \textbf{5.48} & 23.04 & 16.8 & \textbf{2.0} & 9.0 \\
 & AMtown 02 & 6.811 & \textbf{6.569} & 6.777 & 8.544 & 668.73 & \textbf{3.98} & 13.79 & 4.9 & \textbf{1.1} & 4.6 \\
 & AMtown 03 & 4.733 & \textbf{4.103} & 4.615 & 4.661 & 189.82 & \textbf{4.54} & 15.55 & 3.2 & \textbf{0.9} & 3.7 \\
 & AMvalley 01 & 6.134 & 6.902 & \textbf{6.091} & 6.099 & 2747.08 & \textbf{4.98} & 19.52 & 12.9 & \textbf{1.8} & 8.0 \\
 & AMvalley 02 & 14.594 & \textbf{14.499} & 14.598 & 14.619 & 1272.11 & \textbf{2.78} & 7.33 & 5.1 & \textbf{1.1} & 4.5 \\
 & AMvalley 03 & 17.746 & \textbf{17.515} & 17.709 & 17.684 & 341.73 & \textbf{1.79} & 15.30 & 2.9 & \textbf{0.8} & 3.2 \\
 & HKairport GNSS 01 & 0.495 & \textbf{0.463} & 0.479 & 0.462 & 133.61 & \textbf{1.60} & 5.55 & 5.9 & \textbf{1.2} & 5.3 \\
 & HKairport GNSS 02 & \textbf{0.850} & 0.860 & 0.873 & 0.682 & 18.28 & \textbf{1.45} & 4.50 & 2.0 & \textbf{0.7} & 2.8 \\
 & HKairport GNSS 03 & 0.752 & \textbf{0.648} & 0.671 & 0.548 & 9.81 & \textbf{1.98} & 5.00 & 1.4 & \textbf{0.6} & 2.3 \\
 & HKairport 01 & 0.740 & \textbf{0.611} & 0.693 & 0.649 & 186.75 & \textbf{2.22} & 7.72 & 5.9 & \textbf{1.2} & 5.3 \\
 & HKairport 02 & 0.720 & \textbf{0.615} & 0.642 & 0.610 & 26.84 & \textbf{1.82} & 5.19 & 1.9 & \textbf{0.7} & 2.8 \\
 & HKairport 03 & 0.956 & \textbf{0.794} & 0.886 & 0.815 & 20.02 & \textbf{1.26} & 1.72 & 1.6 & \textbf{0.6} & 2.4 \\
 & HKisland GNSS 01 & \textbf{0.671} & 0.790 & 0.725 & 0.682 & 173.24 & \textbf{0.79} & 2.71 & 4.9 & \textbf{0.8} & 3.4 \\
 & HKisland GNSS 02 & 0.853 & \textbf{0.767} & 0.838 & 0.814 & 23.74 & \textbf{1.50} & 2.52 & 1.7 & \textbf{0.5} & 1.8 \\
 & HKisland GNSS 03 & 1.180 & \textbf{0.912} & 1.128 & 1.107 & 38.12 & \textbf{2.04} & 3.01 & 1.3 & \textbf{0.4} & 1.7 \\
 & HKisland 01 & \textbf{0.580} & 0.832 & 0.603 & 0.593 & 157.16 & \textbf{1.11} & 2.95 & 4.6 & \textbf{0.7} & 3.1 \\
 & HKisland 02 & \textbf{0.902} & 1.000 & 0.904 & 0.734 & 22.50 & \textbf{1.19} & 2.16 & 1.6 & \textbf{0.5} & 1.7 \\
 & HKisland 03 & 1.213 & 1.220 & \textbf{1.201} & 0.952 & 7.52 & \textbf{1.79} & 2.68 & 1.0 & \textbf{0.4} & 1.3 \\ \bottomrule
\end{tabular}%
}
\begin{tablenotes}
    \footnotesize
    \item $^1$ {KF\quad: Using the selected keyframes.}
    \item $^2$ {AF\quad: Using all frames without keyframe selection.}
    \item $^3$ --\quad: Unable to evaluate since BALM2 takes too long time (more than $2$ hours for each iteration).
\end{tablenotes}
\end{table*}

\subsection{Accuracy Evaluation} \label{sec.benchmark_acc}
The accuracy evaluation results are presented in Table \ref{tab:benchmark_result}. These results demonstrate that our proposed method achieves accuracy comparable to the baseline, BALM2.
Specifically, on the \textit{Hilti 2022} dataset, the proposed method and BALM2 exhibit similar levels of accuracy. For the \textit{MulRan} dataset, the proposed method generally achieves higher accuracy. On the \textit{MaRS-LVIG} dataset, both the proposed method and BALM2 significantly improve accuracy, with BALM2 typically achieving a lower RMSE of APE.

We then analyze the discrepancy in the performance between our method and BALM2. According to Section \ref{sec.majorization_minimization}, our method should always converge to the same final residual as BALM2, leading to similar accuracy.
However, during the experiments, only the results on the \textit{Hilti 2022} dataset were as expected. For sequences in the \textit{MulRan} and \textit{MaRS-LVIG} datasets, we observed some bias: our method achieved higher accuracy overall on the MulRan dataset, while BALM2 generally reached higher accuracy on the MaRS-LVIG dataset.

\begin{table}[]
\centering
\caption{Residual evaluation result of the proposed method compared with BALM2}
\label{tab:optimized_residual}
\resizebox{\columnwidth}{!}{%
\begin{tabular}{@{}cccccc@{}}
\toprule
\multirow{2}{*}{Dataset}    & \multirow{2}{*}{Sequence} & \multicolumn{3}{c}{Point-to-plane   Residual (m)} & \multirow{2}{*}{{Relative} Error $E_r$ (\%)} \\ \cmidrule(lr){3-5}
 &                   & Init     & BALM2    & Ours       &              \\ \midrule
\multirow{4}{*}{Hilti 2022} & exp01                     & 0.00078        & 0.000756       & 0.00075629      & 0.011879556                         \\
 & exp02             & 0.000724 & 0.000682 & 0.00068204 & 0.061826989  \\
 & exp07             & 0.000696 & 0.000654 & 0.00065473 & 0.154802513  \\
 & exp21             & 0.001449 & 0.001363 & 0.00136321 & 0.013755859  \\ \midrule
\multirow{9}{*}{MulRan}     & DCC01                     & 0.6065         & 0.58269        & 0.58497472      & 0.392031973                         \\
 & DCC02             & 0.589903 & 0.572816 & 0.57502868 & 0.38636252   \\
 & DCC03             & 0.611325 & 0.58973  & 0.59166306 & 0.327742096  \\
 & KAIST01           & 0.552895 & 0.529425 & 0.53039478 & 0.183126797  \\
 & KAIST02           & 0.545412 & 0.528309 & 0.52965771 & 0.255216117  \\
 & KAIST03           & 0.538161 & 0.525821 & 0.52689106 & 0.203526621  \\
 & Riverside01       & 0.554597 & 0.534711 & 0.53158311 & -0.584943807 \\
 & Riverside02       & 0.5569   & 0.529806 & 0.53208306 & 0.429804867  \\
 & Riverside03       & 0.535059 & 0.513857 & 0.51143388 & -0.47150477  \\ \midrule
\multirow{18}{*}{MaRS-LVIG} & AMtown01                  & 0.030732       & 0.029953       & 0.03001535      & 0.209187397                         \\
 & AMtown02          & 0.033919 & 0.032684 & 0.03279262 & 0.331396901  \\
 & AMtown03          & 0.035416 & 0.033925 & 0.03407417 & 0.43979997   \\
 & AMvalley01        & 0.060865 & 0.052944 & 0.05340681 & 0.874544556  \\
 & AMvalley02        & 0.060987 & 0.056357 & 0.05655881 & 0.358187787  \\
 & AMvalley03        & 0.043917 & 0.04187  & 0.04288759 & 2.429783071  \\
 & HKairport\_GNSS01 & 0.040335 & 0.039798 & 0.03981438 & 0.041155416  \\
 & HKairport\_GNSS02 & 0.042663 & 0.041979 & 0.04200763 & 0.068509915  \\
 & HKairport\_GNSS03 & 0.044307 & 0.043319 & 0.04335596 & 0.084869441  \\
 & HKairport01       & 0.039989 & 0.039177 & 0.03922305 & 0.1167463    \\
 & HKairport02       & 0.039889 & 0.039211 & 0.03923612 & 0.065326795  \\
 & HKairport03       & 0.04363  & 0.042772 & 0.0428755  & 0.241974615  \\
 & HKisland\_GNSS01  & 0.082186 & 0.080418 & 0.08045554 & 0.046291902  \\
 & HKisland\_GNSS02  & 0.08168  & 0.079549 & 0.0796355  & 0.108736488  \\
 & HKisland\_GNSS03  & 0.083776 & 0.081833 & 0.0818415  & 0.010948383  \\
 & HKisland01        & 0.068848 & 0.067804 & 0.06787098 & 0.098494413  \\
 & HKisland02        & 0.069898 & 0.068869 & 0.06894176 & 0.104943793  \\
 & HKisland03        & 0.07444  & 0.073163 & 0.07324577 & 0.113079022  \\ \bottomrule
\end{tabular}%
}
\end{table}

To further analyze this scenario, we compared the optimized point-to-plane residuals of BALM2 and our method, and calculated the relative error $E_r$ as
\begin{align}
    E_r=\frac{r_{\text{ours}}-r_{\text{balm}}}{r_{\text{balm}}}\times100\%
\end{align}
where $r_{\text{balm}}$ and $r_{\text{ours}}$ represents the point-to-plane residual optimized by BALM2 and our method, respectively. The result is shown in Table \ref{tab:optimized_residual}.
The results indicate that both BALM2 and our method converge to the same optimal residual, with our method yielding a slightly higher final residual (an average of $0.22\%$ overall). Such behavior is also aligned with our expectations.
Nevertheless, given the physical meaning of the point-to-plane residual, this suggests that the optimized plane features from BALM2 and our method differ by only an extremely small thickness, which is imperceptible in the map. A visualization of the mapping results is shown in Figure \ref{fig:benchmark}. The results demonstrate that both BALM2 and the proposed method effectively improve map consistency and eliminate divergence.
Furthermore, the differences between the mapping results of BALM2 and our method are indistinguishable.
The accuracy bias on the MulRan and MaRS-LVIG datasets is likely caused by the errors introduced by the low-quality plane features.
More visualization results can be found in our video at \url{https://youtu.be/b8Onp0typMU}.
\begin{figure} [ht]
	\centering
	\includegraphics[width=\linewidth]{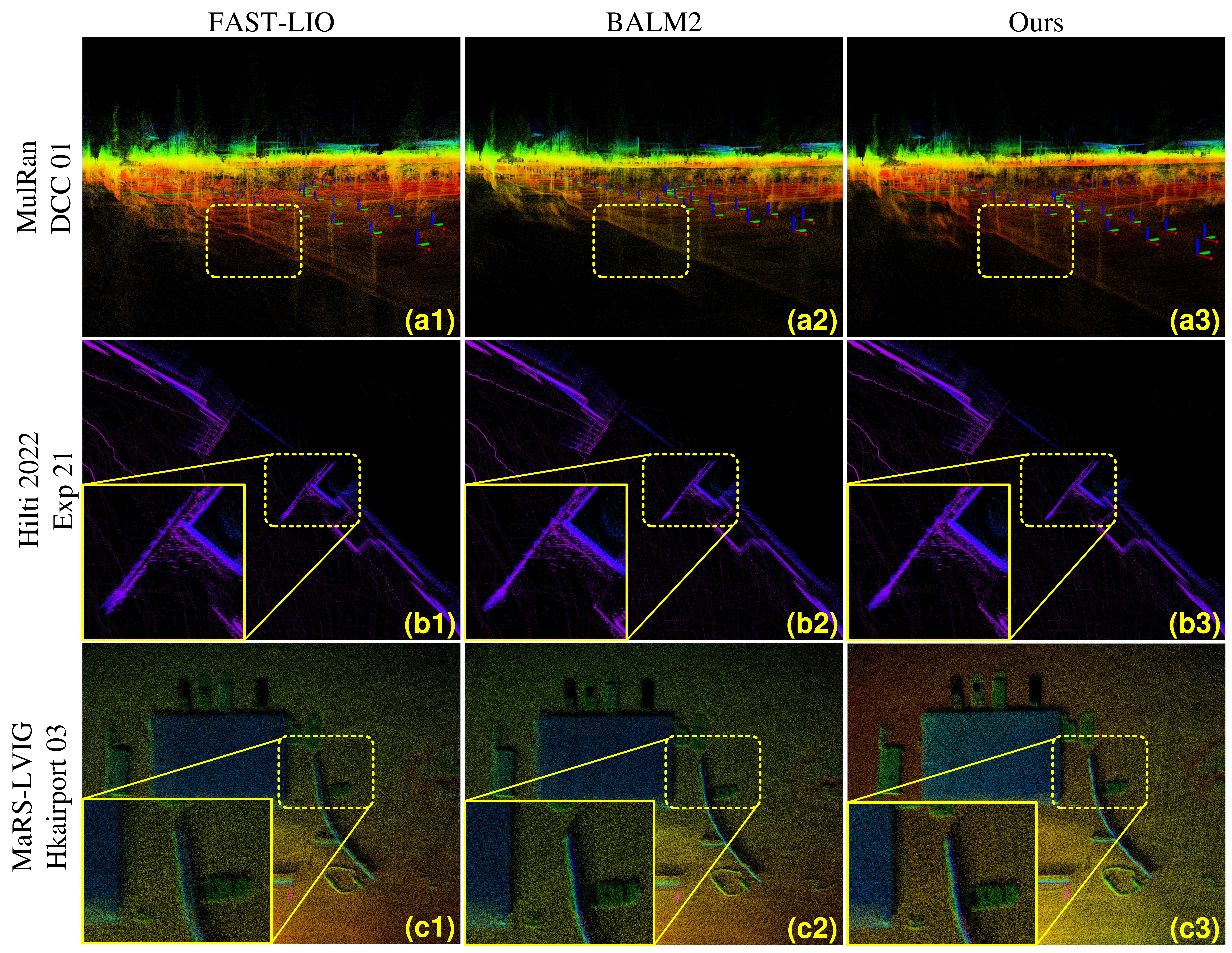}
	\caption{The result of our benchmark evaluation. (a1), (b1), and (c1) illustrates the mapping result of FAST-LIO. (a2), (b2), and (c2) shows the corresponding result of BALM2. (a3), (b3), and (c3) shows the result of our proposed methods. As shown in the figures, our method could effectively optimize the map consistency, reaching a similar result compared with BALM2.}
	\label{fig:benchmark}
\end{figure}
\subsection{Computational Efficiency Evaluation}
The efficiency evaluation results are also reported in Table \ref{tab:benchmark_result}. In summary, the results demonstrate that our method achieves significant improvements in both time and memory efficiency. 

\subsubsection{Time Efficiency}
In terms of time efficiency, our method outperforms the baseline on almost all sequences. Notably, the improvement becomes more pronounced with longer sequences, with the most significant enhancement observed on \textit{AMtown} 01, achieving a 703-fold efficiency increase. The only exceptions occur on the exp07 and exp21 sequences from the Hilti 2022 dataset. These sequences contain only 263 and 286 poses after keyframe extraction, making them the shortest sequences in our experimental datasets. As a result, the improvements in solving linear equations and evaluating the Hessian matrix are less pronounced.

\begin{figure*} [ht]
	\centering
	\includegraphics[width=0.85\linewidth]{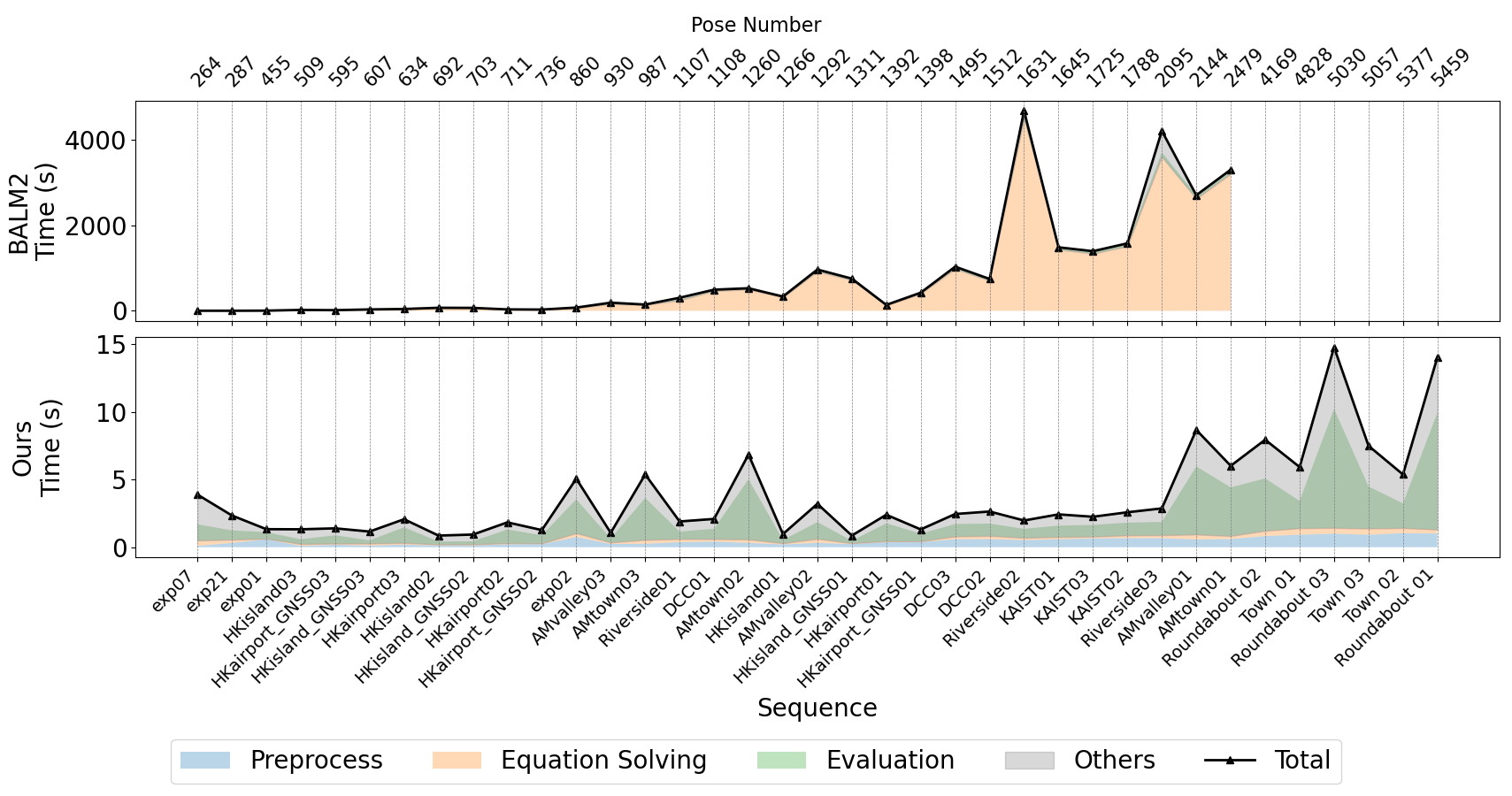}
	\caption{Detailed time consumption evaluation based on benchmark experiment results. The upper figure shows the evaluation result of BALM2, while the lower figure displays the result of the proposed method. {Notice the scale difference on the Y axis.}}
	\label{fig:time_consumption_detailed}
\end{figure*}

We also provide a more detailed evaluation of time consumption, as shown in Figure \ref{fig:time_consumption_detailed}. In this evaluation, the time consumption of the entire process is categorized into four groups: pre-processing, linear equation solving, derivative evaluation, and others (which include all processes not explicitly listed).
Compared to BALM2, the time required to solve linear equations is significantly lower with the proposed method (0.17 seconds on average for the proposed method versus 779.71 seconds on average for BALM2). This represents the primary source of efficiency improvement and aligns with our theoretical analysis and simulation experiment results.
Additionally, the time required to evaluate the Jacobian and Hessian matrices is also significantly reduced (1.18 seconds on average for the proposed method versus 32.73 seconds on average for BALM2). This improvement is due to the proposed method involving fewer computations than BALM2, despite having the same time complexity, and requiring less time to allocate memory for storing the Hessian matrix.

\subsubsection{Memory Efficiency}
In terms of memory efficiency, our method consistently outperforms BALM2 across all sequences. Similar to the improvements in time efficiency, the memory efficiency of our method becomes increasingly significant as the number of poses grows, due to its lower space complexity compared to BALM2. The most substantial improvement is observed in the \textit{AMtown 01} sequence, where our method requires only 2.0 GB of memory, compared to the 16.8 GB used by BALM2.
\begin{figure} [ht]
	\centering
	\includegraphics[width=\linewidth]{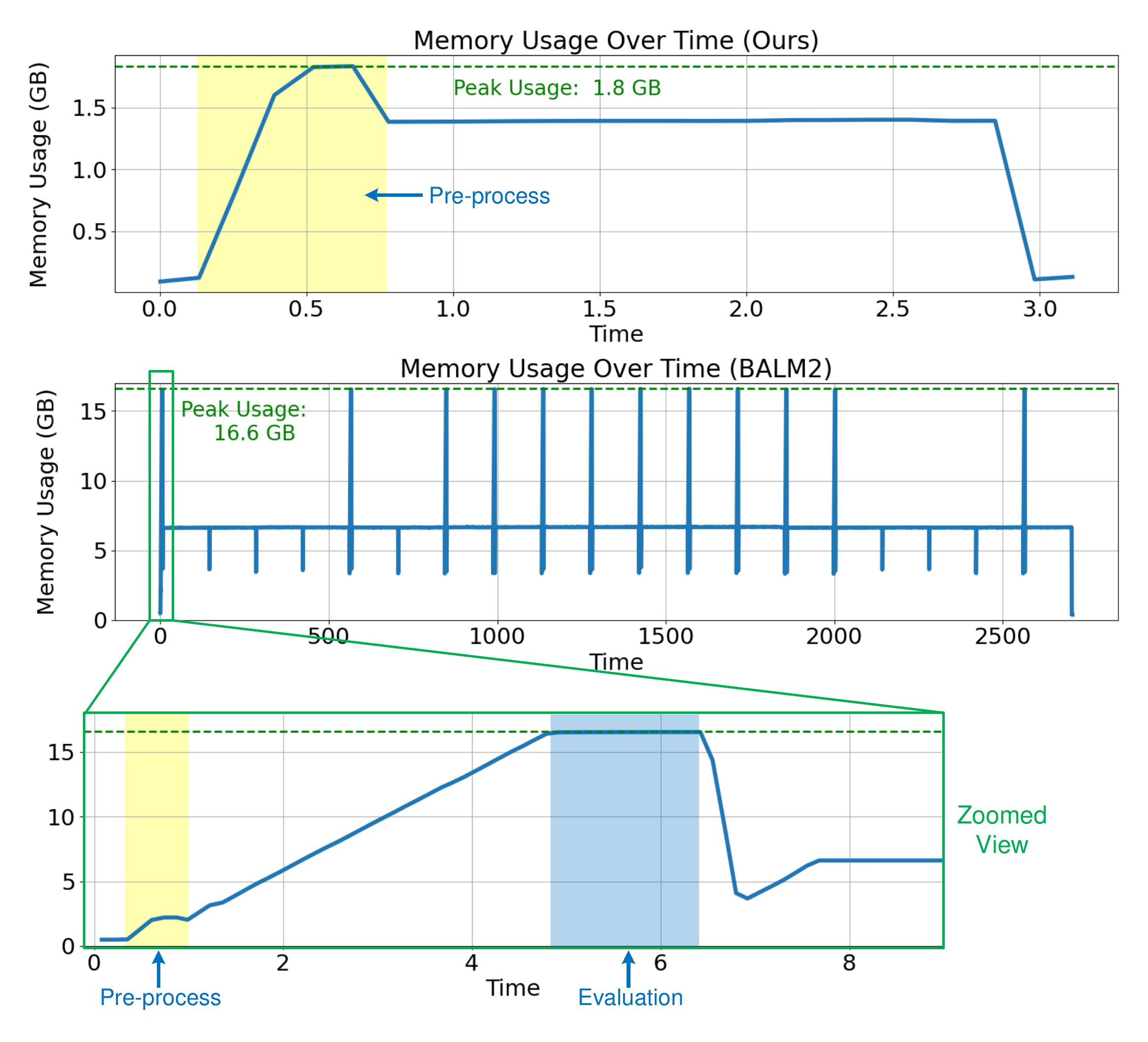}
	\caption{Evaluation result of memory usage over time during benchmark evaluation. {Notice the scale difference on the Y axis.}}
	\label{fig.memory_log}
\end{figure}

During the evaluation, we observed that the peak memory usage of the proposed method and BALM2 occurred at different stages. To illustrate this behavior, we analyzed and presented the memory usage over time for both BALM2 and the proposed method using the \textit{AMtown 01} sequence, which consists of 2,479 poses and 1.6 GB of point cloud data, as shown in Figure \ref{fig.memory_log}.

For BALM2, the peak memory usage occurred during the Hessian evaluation process, which has a space complexity of $O(M_p^2)$. At this stage, significant memory is required to store the dense Hessian matrix. Specifically, when processing the \textit{AMtown 01} sequence, storing the Hessian matrix during evaluation requires $(2479 \times 6)^2$ floating-point numbers, amounting to approximately 1.65 GB of memory per Hessian matrix. Furthermore, since the evaluation is performed using multiple threads, BALM2 maintains one copy of the Hessian matrix for each thread to ensure thread safety. During the experiment, with 8 threads in use, the total memory consumption reached approximately 13.18 GB—exceeding the memory required to store the raw point cloud data itself.

In contrast, the proposed method, which benefits from a space complexity of $O(M_p)$, demonstrated significantly lower memory usage during the evaluation process. The peak memory usage for the proposed method, measured at 1.8 GB, occurred during the adaptive voxelization process, where all raw points are temporarily stored in memory. After the adaptive voxelization step, the point clusters are computed, and the raw points are discarded, reducing memory usage to approximately 1.4 GB. Consequently, the peak memory usage of the proposed method is only slightly larger than the size of the point cloud data.

Moreover, this observation also suggests that the memory bottleneck in the proposed method lies within the feature extraction and matching process (i.e., adaptive voxelization), rather than in the optimization process itself.

\section{Distributed Mapping Experiment} \label{sec.distributed_mapping_experiment}
In this section, we reported our multiple-device distributed mapping experiment for the proposed distributed bundle adjustment framework introduced in Section \ref{sec.distributed_framework}.
\subsection{Experiment Setup}
We tested our framework on the sequence \textit{Bridge 01} from the \textit{HeLiPr} dataset. In this section, we choose the point cloud data captured using an OS2-128\footnote{\url{https://ouster.com/products/hardware/os2-lidar-sensor}}, a 128-channel spinning LiDAR. This LiDAR can provide over 2 million accurately measured points per second. Consequently, the resulting point cloud is extremely dense, requiring significant memory for bundle adjustment processing. In the tested sequence, the LiDAR generated 70.1 GB of point cloud data, which considerably exceeds the capacity of a single device. Thus, we distributed the data across four laptops and performed the 
distributed bundle adjustment process.

We first pre-processed the data. Similar to Section \ref{sec.bench_preprocess}, we used FAST-LIO2 \cite{xu2022fast} for motion compensation of each scan and to generate initial pose estimates. Additionally, we employed BTC \cite{yuan2024btc} for loop closure. The processed LiDAR scans were then divided into four sets based on the initial position estimates, and each set was distributed to a laptop illustrated in Figure \ref{fig.devices}. The separated point clouds and their trajectories are illustrated in Figure \ref{fig.reigon_seperation}.
\begin{figure} [ht]
	\centering
	\includegraphics[width=\linewidth]{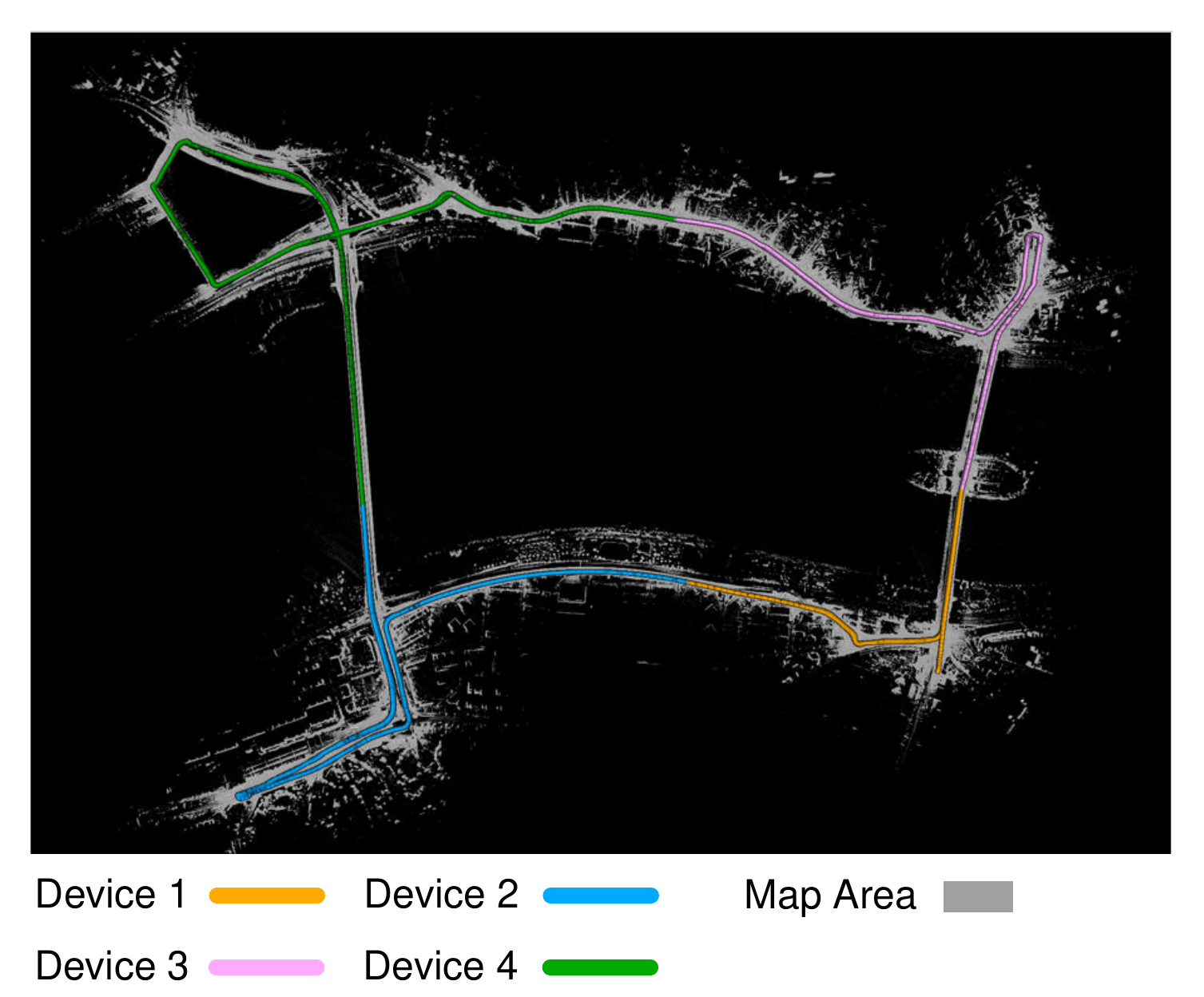}
	\caption{Results of sequence data separation and distribution: The sequence data from Bridge01 was divided into four sets, each of which was handled and processed by a separate computational device.}
	\label{fig.reigon_seperation}
\end{figure}
And more details of the data in each set are illustrated in the TABLE. \ref{tab:distributed_data}.
\begin{table}[ht]
\centering
\caption{Details of the distributed data.}
\label{tab:distributed_data}
{%
\begin{tabular}{@{}ccc@{}}
\toprule
Device   ID & Pose Number & Point Cloud Size (GB) \\ \midrule
1           & 3348       & 10.3                   \\
2           & 5519       & 18.7                  \\
3           & 6421        & 21.4                   \\
4           & 6148       & 19.7                    \\ \midrule
Total       & 21436      & 70.1                \\ \bottomrule
\end{tabular}%
}
\end{table}

\subsection{Experiment Results and Analysis}
We then performed the optimization process and reported the evaluation results as below:
\subsubsection{Map Consistency}
\begin{figure} [ht]
	\centering
	\includegraphics[width=\linewidth]{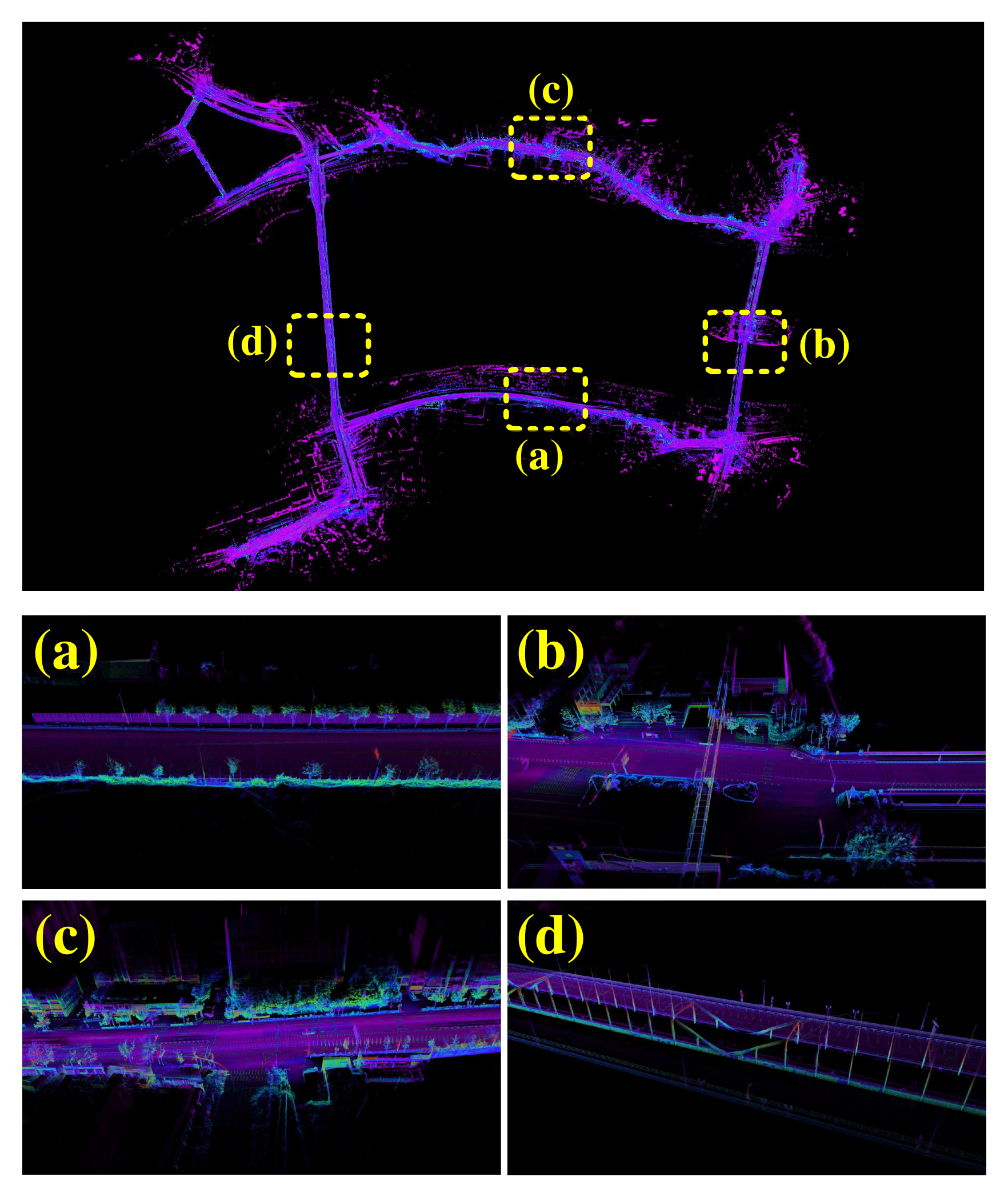}
	\caption{Mapping results of the proposed distributed bundle adjustment. The top figure provides an overview of the optimized map, while (a) to (d) display detailed views of the overlapping areas between the maps on each device.}
	\label{fig.distributed_pc}
\end{figure}

We first evaluated the map consistency of the proposed method. A visualization of the results, including an overview of the map and detailed views of four overlapping map areas from different devices, is presented in Figure \ref{fig.distributed_pc}. The results demonstrate that the proposed method successfully optimized the map and ensured the consistency of maps between devices. Detailed features such as lane markings, signboards, and street trees are also visible and clear in the visualization, making the map suitable for applications like autonomous driving and transportation.

\subsubsection{Time Efficiency Evaluation}
We then evaluated the time efficiency during the experiment. The time consumption, along with detailed information, is reported in Figure \ref{fig.distributed_time_consumption}. During the evaluation, we categorized the time consumption into four primary processes: the data loading and fixed-resolution voxelization process, the overlapping voxel detection and raw points exchange process, the adaptive voxelization process, and the optimization process, as introduced in \ref{sec.distributed_framework}.

\begin{figure} [ht]
	\centering
	\includegraphics[width=\linewidth]{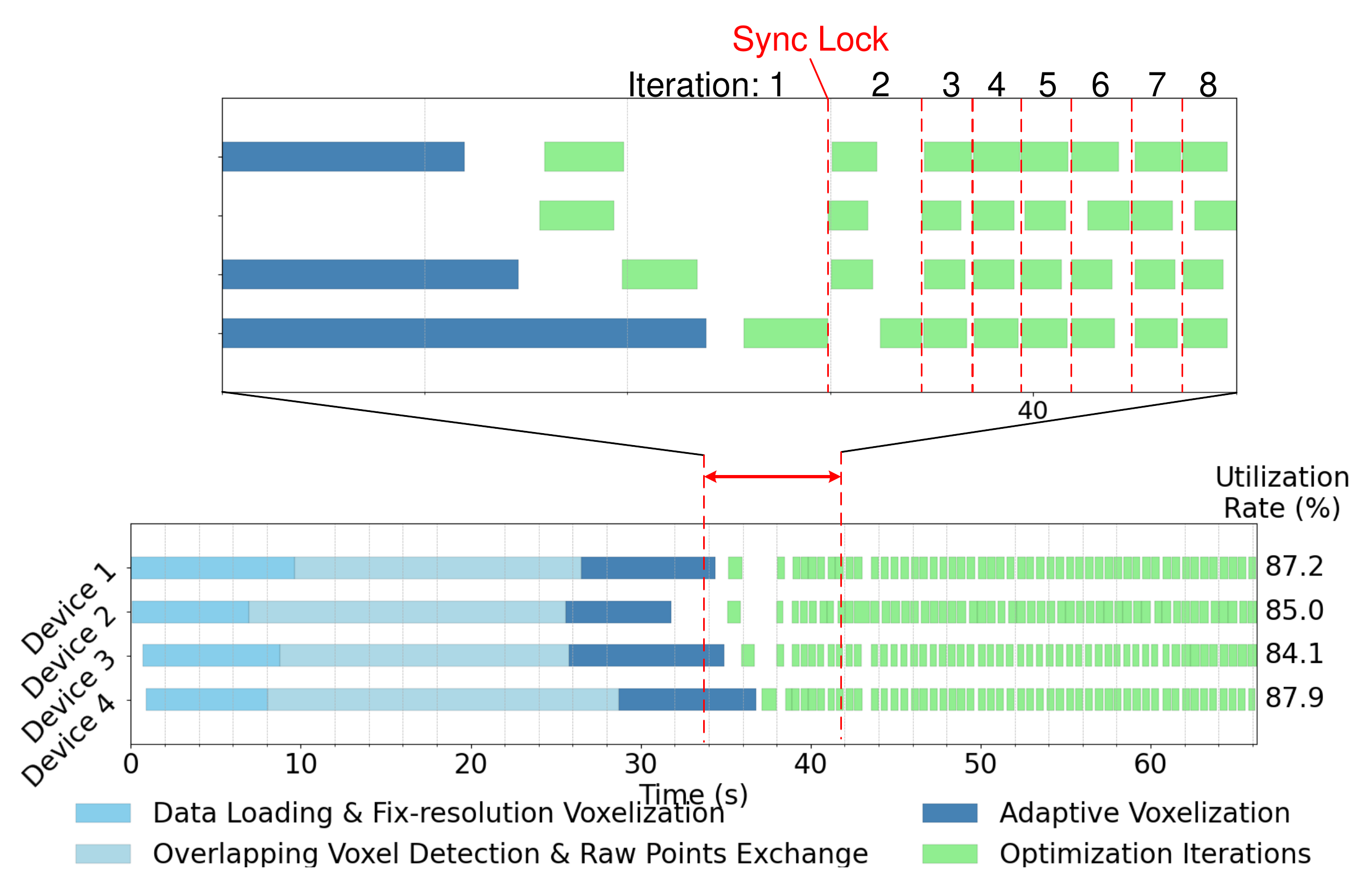}
	\caption{Time consumption of each device in the distributed bundle adjustment experiment. A zoomed-in view is provided at the top to display more details.}
	\label{fig.distributed_time_consumption}
\end{figure}

The results show that the entire bundle adjustment process takes approximately 66 seconds. They also reveal a phenomenon where each device experiences idle periods without any calculations, represented by the blank sections in each bar.
This phenomenon arises from the synchronization requirements during optimization and the unbalanced processing times across devices.
In the distributed bundle adjustment framework, a time synchronization lock is employed: each device can advance to the next optimization iteration only after all devices have updated and synchronized their poses, as illustrated in the detailed view in Figure \ref{fig.distributed_time_consumption}.
In the experiment, the devices were equipped with different types of CPUs, and the amount of data allocated to each device also varied. These factors led to different processing times for each device at each step. Consequently, devices with faster processing speeds were forced to wait for slower devices to complete their tasks. As a result, the overall time required for distributed bundle adjustment is determined by the device with the longest processing time.

During the evaluation, we calculated the utilization rate for each device, defined as the ratio of its effective processing time to its total processing time. The results, shown on the right side of Figure \ref{fig.distributed_time_consumption}, indicate that although Device 3 is equipped with the most powerful CPU (Intel i9-14900HX), it has the lowest utilization rate (84.1\%).
This observation highlights the importance of balancing the computational load across devices. The amount of data allocated to each device should be proportional to its computational capability.

\subsubsection{Bandwidth Evaluation}
We then evaluated and reported the network bandwidth load for each device, as shown in Figure \ref{fig.bandwidth_usage}.
The bandwidth usage per second of each device across time was recorded and presented. The results indicate that the primary bandwidth usage occurs in two distinct phases.
The first phase involves overlapping voxel detection and the exchange of raw point cloud data, as described in Section \ref{sec.distributed_framework}. Since overlapping voxel detection requires only the transmission of compressed data, the primary bandwidth usage during this phase stems from the exchange of raw point cloud data. Although maximum bandwidth may be required during this stage to transmit the data, it only lasts for approximately 17 seconds and occurs only once.
The second phase corresponds to the pose update synchronization process (also illustrated in Section \ref{sec.distributed_framework}). This phase involves the fast iterative synchronization of optimized poses. Since the scan poses contain only a small amount of data, each synchronization iteration (depicted as a peak during the pose update sync period in Figure \ref{fig.bandwidth_usage}) is brief and consumes minimal bandwidth. As a result, the overall bandwidth usage remains within a reasonable range. The total amount of data transmitted by each device is significantly smaller than the size of the point cloud it stores—specifically, 6.9\%, 2.9\%, 5.5\%, and 4.0\% for devices 1 through 4, respectively.

\begin{figure} [ht]
	\centering
	\includegraphics[width=\linewidth]{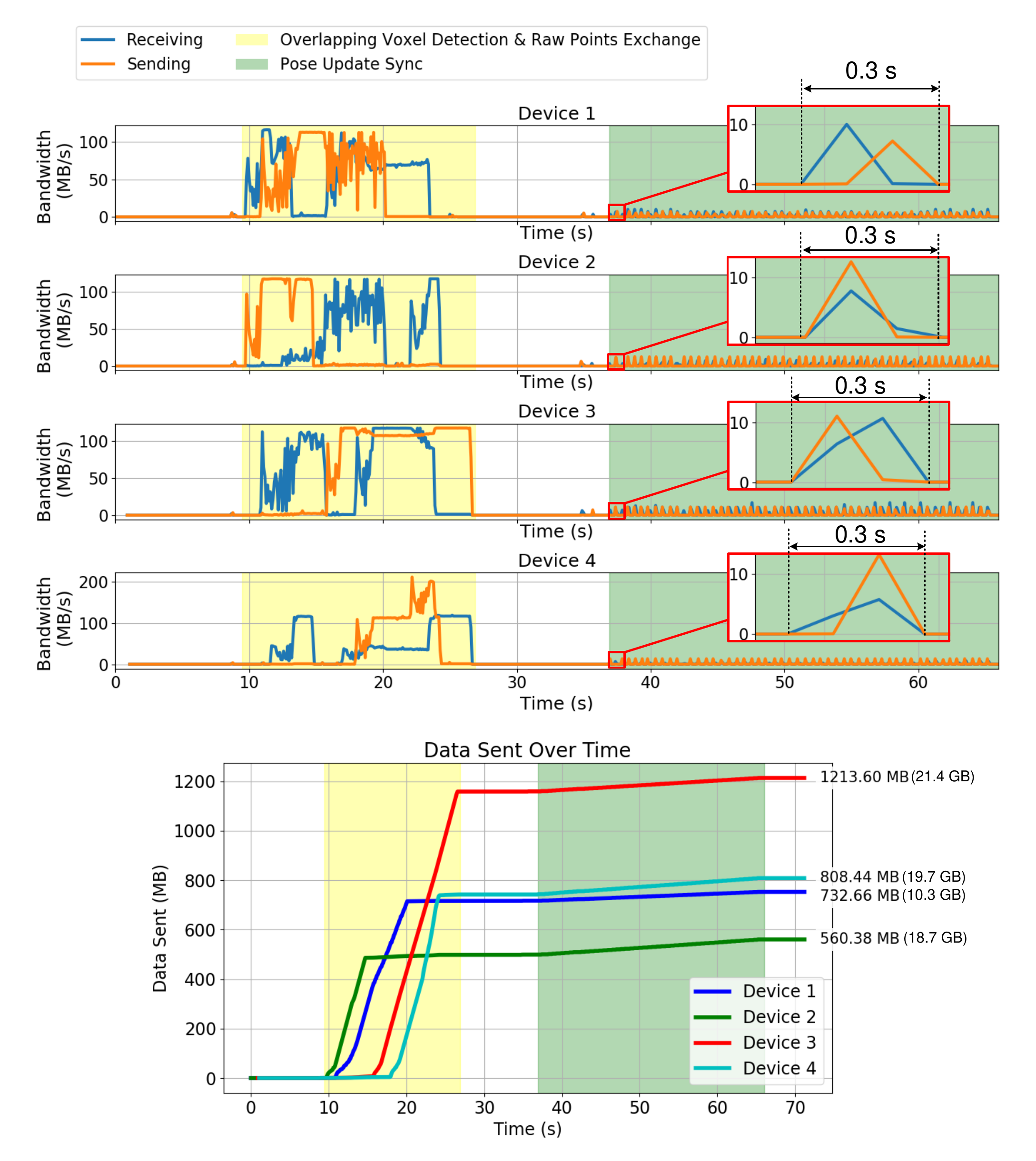}
	\caption{Evaluation result of the bandwidth usage during distributed bundle adjustment experiment. The upper four plots illustrate the bandwidth usage for each device over time, while the lower plot depicts the total data size transmitted by each device. {On the right side of the lower plot, the final data size transmitted by each device is indicated, along with the total data size stored on each device (shown in parentheses).}}
	\label{fig.bandwidth_usage}
\end{figure}

\section{Conclusion and Future Works} \label{sec.conclusion}
This paper proposes a new method for efficient and distributed large-scale point cloud bundle adjustment. Utilizing a majorization-minimization algorithm, the proposed method minimizes the point-to-plane residual by optimizing a newly introduced upper surrogate cost function, in which the LiDAR pose is completely decoupled.

A key advantage of decoupling the LiDAR pose in the surrogate cost function is that it results in a block-diagonal Hessian matrix. This reduces the time complexity of solving linear equations in the Levenberg-Marquardt optimizer from cubic to linear, significantly decreasing the optimization time for large-scale data. Additionally, the decoupled LiDAR poses can be distributed and optimized across multiple devices, enabling tasks with substantial memory demands to be completed using several consumer-level devices with limited memory.

Comprehensive experiments, including simulations and tests on multiple real-world public datasets, were conducted. The results demonstrate that the proposed method converges to the same global optimum as its baseline method, achieving comparable accuracy while being up to 704 times faster in optimization speed. These findings validate the effectiveness of the proposed method. Moreover, the distributed bundle adjustment experiment demonstrated the ability of the proposed method to optimize extremely large-scale data using multiple computational devices.

The experimental results also suggest potential directions for future research. For instance, the accuracy evaluation in the benchmark experiments indicates that the current adaptive voxelization framework, designed to extract and match plane features, may be inadequate for large-scale mapping. In large-scale datasets, plane features can exhibit significant variations in scale, making it hard to determine the adaptive voxelization parameters in practice. Additionally, since the evaluation of the proposed method has emerged as the primary time bottleneck and is parallelizable, accelerating the process using GPUs presents a promising direction for future work. Moreover, the results from the distributed bundle adjustment framework emphasize the importance of achieving load balance across different devices.

Furthermore, the optimization framework and algorithm are extendable. For instance, the proposed method can be easily combined with GNSS/RTK measurements, which is crucial for large-scale mapping to mitigate accumulation errors. Additionally, the decentralized nature of the proposed distributed bundle adjustment framework makes it applicable to other multi-agent tasks.

{
\appendix
\section{Proof of Lemma 1} \label{proof.lemma1}
Let \( S = \sum_{i=1}^{n} x_i \). Then the inequality becomes:
\begin{align*}
    S^2 \ge -4 z^2 + 4 S z.
\end{align*}
Bringing all terms to the left-hand side, we get:
\begin{align*}
    S^2 - 4 S z + 4 z^2 \ge 0.
\end{align*}
Observe that the left-hand side is a perfect square:
\begin{align*}
    (S - 2 z)^2 \ge 0.
\end{align*}
Since the square of any real number is non-negative, the inequality holds for all \( z \in \mathbb{R} \). Equality holds when \( (S - 2 z)^2 = 0 \), that is, when \( S - 2 z = 0 \). Therefore,
\begin{align*}
    z^\star = \frac{1}{2} S = \frac{1}{2} \sum_{i=1}^{n} x_i.
\end{align*}
\section{Proof of Theorem 1} \label{proof.theorem1}
Based on Lemma 1, each cost item $c_i(\mathbf T)$ in the original cost function (1) has 
\scriptsize
\begin{align}
    c_i(\mathbf T)=&\lambda_{\text{min}}\left(\frac{1}{N_i} \mathbf P_i - \frac{1}{N_i^2} \mathbf v_i \mathbf v_i^T \right) \; \nonumber \\
    =& \min_{\| \mathbf u \| = 1} \mathbf u^T \left( \frac{1}{N_i} \mathbf P_i - \frac{1}{N_i^2} \mathbf v_i \mathbf v_i^T \right) \mathbf u \; \nonumber \\
    \leq & {\mathbf u_{\text{min}}^{(k)}}^T \left( \frac{1}{N_i} \mathbf P_i - \frac{1}{N_i^2} \mathbf v_i \mathbf v_i^T \right) \mathbf u_{\text{min}}^{(k)} \ \tag{\text{Setting $\mathbf u = \mathbf u_{\text{min}}^{(k)}$}}\; \nonumber \\
    =& \frac{1}{N_i} \sum_{j=1}^{M_p} {\mathbf u_{\text{min}}^{(k)}}^T {\mathbf P}_{ij} \mathbf u_{\text{min}}^{(k)} - \frac{1}{N_i^2} {\mathbf u_{\text{min}}^{(k)}}^T \bigg( \sum_{j=1}^{M_p} {\mathbf v}_{ij} \bigg) \bigg( \sum_{j=1}^{M_p} {\mathbf v}_{ij} \bigg)^T \mathbf u_{\text{min}}^{(k)} \; \nonumber \\
    =& \frac{1}{N_i} \sum_{j=1}^{M_p} {\mathbf u_{\text{min}}^{(k)}}^T {\mathbf P}_{ij} \mathbf u_{\text{min}}^{(k)} - \frac{1}{N_i^2} \left( \sum_{j=1}^{M_p} {\mathbf u_{\text{min}}^{(k)}}^T {\mathbf v}_{ij} \right)^2 \nonumber \\
    \leq & \frac{1}{N_i} \sum_{j=1}^{M_p} {\mathbf u_{\text{min}}^{(k)}}^T {\mathbf P}_{ij} \mathbf u_{\text{min}}^{(k)} - \frac{4}{N_i^2} \sum_{j=1}^{M_p} \Big( {\mathbf u_{\text{min}}^{(k)}}^T {\mathbf v}_{ij} \Big) z^{(k)} \; \nonumber \\
    & \qquad + \frac{4}{N_i^2} {z^{(k)}}^2 \tag{\text{Lemma 1}}\; \nonumber \\
    =& \sum_{j=1}^{M_p} \left( \frac{1}{N_i} {\mathbf u_{\text{min}}^{(k)}}^T {\mathbf P}_{ij} \mathbf u_{\text{min}}^{(k)} - \frac{4 z^{(k)}}{N_i^2} {\mathbf u_{\text{min}}^{(k)}}^T {\mathbf v}_{ij}  \right) + \frac{4}{N_i^2} {z^{(k)}}^2 \; \nonumber \\
    \triangleq& c_{M_i}(\mathbf T | \mathbf T^{(k)}) 
\end{align}
\normalsize
Particularly, when $\mathbf T = \mathbf T^{(k)}$, we have
\footnotesize
\begin{align}
    c_{i} (\mathbf T^{(k)}) =& \lambda_{\text{min}}\left(\frac{1}{N_i} \mathbf P_i^{(k)} - \frac{1}{N_i^2} \mathbf v_i^{(k)} {\mathbf v_i^{(k)}}^T \right) \; \nonumber \\
    =&  {\mathbf u_{\text{min}}^{(k)}}^T \left( \frac{1}{N_i} \mathbf P_i^{(k)} - \frac{1}{N_i^2} \mathbf v_i^{(k)} {\mathbf v_i^{(k)}}^T \right) \mathbf u_{\text{min}}^{(k)} \nonumber \\
    =& \frac{1}{N_i} \sum_{j=1}^{M_p} {\mathbf u_{\text{min}}^{(k)}}^T {\mathbf P}_{ij}^{(k)} \mathbf u_{\text{min}}^{(k)} - \frac{1}{N_i^2} \left( \sum_{j=1}^{M_p} {\mathbf u_{\text{min}}^{(k)}}^T {\mathbf v}_{ij}^{(k)} \right)^2
\end{align}
\normalsize
where $\mathbf P_i^{(k)}, \mathbf v_i^{(k)}, \mathbf P_{ij}^{(k)}, \mathbf v_{ij}^{(k)}$ are $\mathbf P_i, \mathbf v_i, \mathbf P_{ij}, \mathbf v_{ij}$, respectively,  being evaluated at $\mathbf T^{(k)}$, $\mathbf u_{\text{min}}^{(k)}$ is the eigenvector corresponding to the minimal eigenvalue of $\frac{1}{N_i} \mathbf P_i^{(k)} - \frac{1}{N_i^2} \mathbf v_i^{(k)} {\mathbf v_i^{(k)}}^T$.

On the other hand, 
\footnotesize
\begin{align}
    c_{M_i}(\mathbf T^{(k)}|\mathbf T^{(k)}) =& \Bigg( \sum_{j=1}^{M_p} \Big( \frac{1}{N_i} {\mathbf u_{\text{min}}^{(k)}}^T {\mathbf P}_{ij} \mathbf u_{\text{min}}^{(k)} - \frac{4 z^{(k)}}{N_i^2} {\mathbf u_{\text{min}}^{(k)}}^T {\mathbf v}_{ij}  \Big) \; \nonumber \\
    & \qquad + \frac{4}{N_i^2} {z^{(k)}}^2 \Bigg) \Bigg|_{\mathbf T = \mathbf T^{(k)}} \nonumber \\
    =&\sum_{j=1}^{M_p} \left( \frac{1}{N_i} {\mathbf u_{\text{min}}^{(k)}}^T {\mathbf P}_{ij}^{(k)} \mathbf u_{\text{min}}^{(k)} - \frac{4 z^{(k)}}{N_i^2} {\mathbf u_{\text{min}}^{(k)}}^T {\mathbf v}_{ij}^{(k)}  \right) \; \nonumber \\
    & \qquad + \frac{4}{N_i^2} {z^{(k)}}^2 \nonumber \\
    =& \frac{1}{N_i}  \sum_{j=1}^{M_p} {\mathbf u_{\text{min}}^{(k)}}^T {\mathbf P}_{ij}^{(k)} \mathbf u_{\text{min}}^{(k)} - \frac{4 z^{(k)}}{N_i^2} \sum_{j=1}^{M_p} {\mathbf u_{\text{min}}^{(k)}}^T {\mathbf v}_{ij}^{(k)} \; \nonumber \\
    & \qquad + \frac{4}{N_i^2} {z^{(k)}}^2. 
\end{align}
\normalsize
Substituting $z^{(k)} = \frac{1}{2} \sum_{j=1}^{M_p} {\mathbf u_{\text{min}}^{(k)}}^T {\mathbf v}_{ij}^{(k)}$, we have
\footnotesize
\begin{align}
    c_{M_i}(\mathbf T^{(k)}|\mathbf T^{(k)}) =& \frac{1}{N_i} \sum_{j=1}^{M_p} {\mathbf u_{\text{min}}^{(k)}}^T {\mathbf P}_{ij}^{(k)} \mathbf u_{\text{min}}^{(k)} - \frac{1}{N_i^2} \left( \sum_{j=1}^{M_p} {\mathbf u_{\text{min}}^{(k)}}^T {\mathbf v}_{ij}^{(k)} \right)^2, 
\end{align}
\normalsize
which is equal to $c_i(\mathbf T^{(k)})$. Therefore, we have
\footnotesize
\begin{align}
     c_{M_i}(\mathbf T|\mathbf T^{(k)})\geq c_i(\mathbf T), \qquad c_{M_i}(\mathbf T^{(k)}|\mathbf T^{(k)}) = c_i(\mathbf T^{(k)})
\end{align}
\normalsize
for $\forall \mathbf T \in SE(3) \times \cdots \times SE(3)$.

\section{{Proof of Theorem 2}} \label{proof.theorem2}
As our design, (\ref{eq:surrogate}) is the summation of a set of sub-functions where each is only related to one scan pose. In this case, there is:
\scriptsize
\begin{align}
    \frac{\partial}{\partial \mathbf T} c_{M_i}(\mathbf T\mid\mathbf T^{(k)}) =& \frac{\partial}{\partial \mathbf T} \Bigg( \sum_{j=1}^{M_p} \left( \frac{1}{N_i} {\mathbf u_{\min}^{(k)}}^T {\mathbf P}_{ij} \mathbf u_{\min}^{(k)} - \frac{4 z^{(k)}}{N_i^2} {\mathbf u_{\min}^{(k)}}^T {\mathbf v}_{ij}  \right) \; \nonumber \\
    & + \frac{4}{N_i^2} {z^{(k)}}^2 \Bigg) \; \nonumber \\
    =& \sum_{j=1}^{M_p} \frac{\partial}{\partial \mathbf T} \underbrace{ \left(\frac{1}{N_i} {\mathbf u_{\min}^{(k)}}^T {\mathbf P}_{ij} \mathbf u_{\min}^{(k)} - \frac{4 z^{(k)}}{N_i^2} {\mathbf u_{\min}^{(k)}}^T {\mathbf v}_{ij}  \right)}_{f(\mathbf T_j \mid \mathbf T^{(k)})} \; \nonumber \\
    =& \begin{bmatrix}
        \cdots& \frac{\partial}{\partial \mathbf T_j} f(\mathbf T_j \mid \mathbf T^{(k)})& \cdots
    \end{bmatrix} \; \nonumber \\
\end{align}
\normalsize
Since $\frac{\partial}{\partial \mathbf T_j} f(\mathbf T_j \mid \mathbf T^{(k)})$ is still only {related} to {$\mathbf T_j$}, which implies that
\footnotesize
\begin{align}
    \frac{\partial^2}{\partial \mathbf T_k \partial \mathbf T_j} f(\mathbf T_j \mid \mathbf T^{(k)}) = \begin{cases}
    \frac{\partial^2}{(\partial \mathbf T_j)^2} f(\mathbf T_j \mid \mathbf T^{(k)}) & k = j, \\
    0 & k \neq j.
    \end{cases}
\end{align}
\normalsize
Thus, the second-order derivative of $c_{M_i}(\mathbf T\mid\mathbf T^{(k)})$ is diagonal:
\footnotesize
\begin{align}
    \frac{\partial^2}{\partial \mathbf T^2} c_{M_i}(\mathbf T\mid\mathbf T^{(k)}) =& \text{diag}\left(\cdots, \frac{\partial^2}{\partial \mathbf T_j^2} f(\mathbf T_j \mid \mathbf T^{(k)}), \cdots \right)
\end{align}
\normalsize
We then analyze the first-order and second-order derivatives of the $f(\mathbf T_j \mid \mathbf T^{(k)})$ related to $\mathbf T_j$, we first expand and reform $f(\mathbf T_j \mid \mathbf T^{(k)})$ as:
\scriptsize
\begin{align}
    f(\mathbf T_j\mid\mathbf T^{(k)})=&\frac{1}{N_i} {\mathbf u_{\min}^{(k)}}^T {\mathbf P}_{ij} \mathbf u_{\min}^{(k)} - \frac{4 z^{(k)}}{N_i^2} {\mathbf u_{\min}^{(k)}}^T {\mathbf v}_{ij} \; \nonumber \\
    =&\frac{1}{N_i} {\mathbf u_{\text{min}}^{(k)}}^T \left(\mathbf R_j \mathbf P_{f_{ij}} \mathbf R_j^T + \mathbf t_j \mathbf v_{f_{ij}}^T \mathbf R_j^T + \mathbf R_j \mathbf v_{f_{ij}} \mathbf t_j^T + N_{ij} \mathbf t_j \mathbf t_j^T \right) \mathbf u_{\text{min}}^{(k)} \; \nonumber \\
    & - \frac{4z^{(k)}}{N_i^2}{\mathbf u_{\text{min}}^{(k)}}^T \left( \mathbf R_j \mathbf v_{f_{ij}} + N_{ij} \mathbf t_j \right) \; \nonumber \\
    =& \underbrace{ \frac{1}{N_i} {\mathbf u_{\text{min}}^{(k)}}^T \mathbf R_j \mathbf P_{f_{ij}} \mathbf R_j^T \mathbf u_{\text{min}}^{(k)} - \frac{4z^{(k)}}{N_i^2} {\mathbf u_{\text{min}}^{(k)}}^T \mathbf R_j \mathbf v_{f_{ij}} }_{g(\mathbf R_j)} \; \nonumber \\
    & + \underbrace{\frac{2}{N_i} {\mathbf u_{\text{min}}^{(k)}}^T \mathbf R_j \mathbf v_{f_{ij}} \mathbf t_j^T \mathbf u_{\text{min}}^{(k)}}_{h(\mathbf R_j, \mathbf t_j)} \; \nonumber \\
    &+ \underbrace{\frac{N_{ij}}{N_i} {\mathbf u_{\text{min}}^{(k)}}^T \mathbf t_j \mathbf t_j^T \mathbf u_{\text{min}}^{(k)} - \frac{4z^{(k)}N_{ij}}{N_i^2} {\mathbf u_{\text{min}}^{(k)}}^T \mathbf t_j}_{l(\mathbf t_j)}
\end{align}
\normalsize
We then analyze the first-order and second-order derivatives of $g(\mathbf{R}_j)$, $h(\mathbf{R}_j, \mathbf{t}_j)$, and $l(\mathbf{t}_j)$ by applying the SE(3) perturbation described in (\ref{eq:input_perturbation}).
We begin with $g(\mathbf{R}_j)$, which depends solely on $\mathbf{R}_j$.
Considering the approximation $\exp\left( \lfloor \delta \boldsymbol{\phi}_j \rfloor \right) \approx \mathbf{I} + \lfloor \delta \boldsymbol{\phi}_j \rfloor + \frac{1}{2} \lfloor \delta \boldsymbol{\phi}_j \rfloor^2$, it can be shown that
\scriptsize
\begin{align}
    g(\mathbf R_j \boxplus & \delta \boldsymbol \phi_j) \; \nonumber \\
    =& \frac{1}{N_i} {\mathbf u_{\text{min}}^{(k)}}^T {\exp\left( \lfloor \delta \boldsymbol{\phi}_j \rfloor \right) \mathbf R_j} \mathbf P_{f_{ij}} {\mathbf R_j}^T \exp\left( \lfloor \delta \boldsymbol{\phi}_j \rfloor \right)^T \mathbf u_{\text{min}}^{(k)} \; \nonumber \\
    & - \frac{4z^{(k)}}{N_i^2} {\mathbf u_{\text{min}}^{(k)}}^T \exp\left( \lfloor \delta \boldsymbol{\phi}_j \rfloor \right) {\mathbf R_j} \mathbf v_{f_{ij}} \; \nonumber \\
    \approx& \frac{1}{N_i} {\mathbf u_{\text{min}}^{(k)}}^T \left( \mathbf I + \lfloor \delta \boldsymbol{\phi}_j \rfloor + \frac{1}{2} \lfloor \delta \boldsymbol{\phi}_j \rfloor^2 \right) {\mathbf R_j} \mathbf P_{f_{ij}} {\mathbf R_j}^T \; \nonumber \\ 
    & \left( \mathbf I - \lfloor \delta \boldsymbol{\phi}_j \rfloor + \frac{1}{2} \lfloor \delta \boldsymbol{\phi}_j \rfloor^2 \right) \mathbf u_{\text{min}}^{(k)} \; \nonumber \\
    &- \frac{4z^{(k)}}{N_i^2} {\mathbf u_{\text{min}}^{(k)}}^T \left( \mathbf I + \lfloor \delta \boldsymbol{\phi}_j \rfloor + \frac{1}{2} \lfloor \delta \boldsymbol{\phi}_j \rfloor^2 \right) {\mathbf R_j} \mathbf v_{f_{ij}} \; \nonumber \\
    =& \underbrace{\frac{1}{N_i} {\mathbf u_{\text{min}}^{(k)}}^T {\mathbf R_j} \mathbf P_{f_{ij}} {\mathbf R_j}^T \mathbf u_{\text{min}}^{(k)} - \frac{4z^{(k)}}{N_i^2} {\mathbf u_{\text{min}}^{(k)}}^T {\mathbf R_j} \mathbf v_{f_{ij}}}_{g_0} \; \nonumber \\
    & + \underbrace{\bigg( \frac{2}{N_i} {\mathbf u_{\text{min}}^{(k)}}^T {\mathbf R_j} \mathbf P_{f_{ij}} {\mathbf R_j}^T \lfloor \mathbf u_{\text{min}}^{(k)} \rfloor}_{\mathbf g_\phi} \; \nonumber \\ 
    & \qquad \underbrace{ + \frac{4z^{(k)}}{N_i^2} {\mathbf u_{\text{min}}^{(k)}}^T  \lfloor {\mathbf R_j} \mathbf v_{f_{ij}} \rfloor \bigg)}_{\mathbf g_\phi} \delta \boldsymbol{\phi}_j \; \nonumber \\
    &+ \frac{1}{2} \delta \boldsymbol \phi_j^T \underbrace{\bigg(-\frac{2}{N_i} \lfloor \mathbf u_{\text{min}}^{(k)} \rfloor {\mathbf R_j} \mathbf P_{f_{ij}} {\mathbf R_j}^T \lfloor \mathbf u_{\text{min}}^{(k)} \rfloor }_{\mathbf g_{\phi \phi}} \; \nonumber \\
    & \qquad \underbrace{ + \frac{1}{N_i} \lfloor \mathbf u_{\text{min}}^{(k)} \rfloor \lfloor \mathbf R_j \mathbf P_{f_{ij}} {\mathbf R_j}^T \mathbf u_{\text{min}}^{(k)} \rfloor}_{\mathbf g_{\phi \phi}} \; \nonumber \\
    & \qquad \underbrace{+ \frac{1}{N_i} \lfloor \mathbf R_j \mathbf P_{f_{ij}} {\mathbf R_j}^T \mathbf u_{\text{min}}^{(k)} \rfloor \lfloor \mathbf u_{\text{min}}^{(k)} \rfloor}_{\mathbf g_{\phi \phi}} \; \nonumber \\
    & \underbrace{- \frac{2 z^{(k)}}{N_i^2} \lfloor \mathbf u_{\text{min}}^{(k)} \rfloor \lfloor \mathbf R_j \mathbf v_{f_{ij}} \rfloor - \frac{2 z^{(k)}}{N_i^2} \lfloor \mathbf R_j \mathbf v_{f_{ij}} \rfloor \lfloor \mathbf u_{\text{min}}^{(k)} \rfloor \bigg)}_{\mathbf g_{\phi \phi}} \delta \boldsymbol \phi_j \; \nonumber \\
    &+ o \left( \delta \boldsymbol \phi_j^3 \right)
\end{align}
\normalsize
We then discuss $l(\mathbf t_j)$ with the same SE(3) perturbation.
\scriptsize
\begin{align}
    l( \exp( \lfloor \delta& \boldsymbol{\phi}_j\rfloor) \mathbf t_j+\delta \mathbf t_j) \; \nonumber \\
    =& \frac{N_{ij}}{N_i} (\exp \left( \lfloor \delta \boldsymbol \phi_j \rfloor \right) \mathbf t_j + \delta \mathbf t_j)^T \mathbf u_{\text{min}}^{(k)} {\mathbf u_{\text{min}}^{(k)}}^T ( \exp \left( \lfloor \delta \boldsymbol \phi_j \rfloor \right) \mathbf t_j + \delta \mathbf t_j) \; \nonumber \\
    &- \frac{4z^{(k)}N_{ij}}{N_i^2} {\mathbf u_{\text{min}}^{(k)}}^T ( \exp \left( \lfloor \delta \boldsymbol \phi_j \rfloor \right) \mathbf t_j + \delta \mathbf t_j) \; \nonumber \\
    \approx & \frac{N_{ij}}{N_i} \left( \left( \mathbf I + \lfloor \delta \boldsymbol{\phi}_j \rfloor + \frac{1}{2} \lfloor \delta \boldsymbol{\phi}_j \rfloor^2 \right) \mathbf t_j + \delta \mathbf t_j \right)^T \mathbf u_{\text{min}}^{(k)} \; \nonumber \\ 
    & \qquad \qquad{\mathbf u_{\text{min}}^{(k)}}^T \left( \left( \mathbf I + \lfloor \delta \boldsymbol{\phi}_j \rfloor + \frac{1}{2} \lfloor \delta \boldsymbol{\phi}_j \rfloor^2 \right) \mathbf t_j + \delta \mathbf t_j \right) \; \nonumber \\
    &- \frac{4z^{(k)}N_{ij}}{N_i^2} {\mathbf u_{\text{min}}^{(k)}}^T \left( \left( \mathbf I + \lfloor \delta \boldsymbol{\phi}_j \rfloor + \frac{1}{2} \lfloor \delta \boldsymbol{\phi}_j \rfloor^2 \right) \mathbf t_j + \delta \mathbf t_j \right) \; \nonumber \\
    =& \underbrace{\frac{N_{ij}}{N_i} {\mathbf t_j}^T {\mathbf u_{\text{min}}^{(k)}} {\mathbf u_{\text{min}}^{(k)}}^T {\mathbf t_j}  - \frac{4z^{(k)}N_{ij}}{N_i^2} {\mathbf u_{\text{min}}^{(k)}}^T {\mathbf t_j}}_{l_0} \; \nonumber \\
    &+ \underbrace{\left(  \frac{2 N_{ij}}{N_i} {\mathbf t_j}^T {\mathbf u_{\text{min}}^{(k)}} {\mathbf u_{\text{min}}^{(k)}}^T  - \frac{4z^{(k)}N_{ij}}{N_i^2} {\mathbf u_{\text{min}}^{(k)}}^T \right)}_{\mathbf l_t} \delta \mathbf t_j \; \nonumber \\
    &+ \frac{1}{2} \delta \mathbf t_j^T \underbrace{ \left( \frac{2 N_{ij}}{N_i} {\mathbf u_{\text{min}}^{(k)}} {\mathbf u_{\text{min}}^{(k)}}^T \right) }_{\mathbf l_{tt}} \delta \mathbf t_j \; \nonumber \\ 
    &+ \underbrace{\left( \frac{2 N_{ij}}{N_i}  {\mathbf u_{\text{min}}^{(k)}}^T \mathbf t_j {\mathbf t_j}^T \lfloor {\mathbf u_{\text{min}}^{(k)}} \rfloor - \frac{4z^{(k)} N_{ij}}{N_i^2} { \mathbf t_j }^T \lfloor \mathbf u_{\text{min}}^{(k)} \rfloor \right)}_{\mathbf l_{\boldsymbol \phi}} \delta \boldsymbol \phi_j \; \nonumber \\
    &+ \frac{1}{2} \delta \boldsymbol \phi_j^T \underbrace{ \bigg( -\frac{2N_{ij}}{N_i} \lfloor \mathbf t_j \rfloor \mathbf u_{\text{min}}^{(k)} {\mathbf u_{\text{min}}^{(k)}}^T \lfloor \mathbf t_j \rfloor + \frac{N_{ij}}{N_i} \lfloor \mathbf t_j \rfloor \lfloor \mathbf u_{\text{min}}^{(k)} {\mathbf u_{\text{min}}^{(k)}}^T \mathbf t_j \rfloor }_{\mathbf l_{\boldsymbol \phi \boldsymbol \phi}} \; \nonumber \\
    & \quad \underbrace{+ \frac{N_{ij}}{N_i} \lfloor \mathbf u_{\text{min}}^{(k)} {\mathbf u_{\text{min}}^{(k)}}^T \mathbf t_j \rfloor \lfloor \mathbf t_j \rfloor- \frac{4 z^{(k)} N_{ij}}{N_i^2} \lfloor \mathbf u_{\text{min}}^{(k)} \rfloor \lfloor \mathbf t_j \rfloor \bigg) }_{\mathbf l_{\boldsymbol \phi \boldsymbol \phi}} \delta \boldsymbol \phi_j \; \nonumber \\
    & + \delta \boldsymbol \phi_j^T \underbrace{ \bigg( \frac{2 N_{ij}}{N_i} \lfloor \mathbf t_j \rfloor \mathbf u_{\text{min}}^{(k)} {\mathbf u_{\text{min}}^{(k)}}^T \bigg) }_{\mathbf l_{\phi t}} \delta \mathbf t_j + o(\delta \boldsymbol \phi_j^3)
\end{align}
\normalsize
Finally, we discuss the $h(\mathbf R_j, \mathbf t_j)$.
\scriptsize
\begin{align}
    h(\mathbf R_j \boxplus & \delta \boldsymbol \phi_j, \exp (\lfloor \delta \boldsymbol \phi_j \rfloor) \mathbf t_j + \delta \mathbf t_j) \; \nonumber \\
    =& \frac{2}{N_i} {\mathbf u_{\text{min}}^{(k)}}^T \exp \left( \lfloor \delta \boldsymbol \phi_j \rfloor \right) \mathbf R_j \mathbf v_{f_{ij}} \left( \exp \left( \lfloor \delta \boldsymbol \phi_j \rfloor \right) \mathbf t_j + \delta \mathbf t_j \right)^T \mathbf u_{\text{min}}^{(k)} \; \nonumber \\
    \approx & \frac{2}{N_i} {\mathbf u_{\text{min}}^{(k)}}^T \left(\mathbf I + \lfloor \delta \boldsymbol \phi_j \rfloor + \frac{1}{2} \lfloor \delta \boldsymbol \phi_j \rfloor^2 \right) \mathbf R_j \mathbf v_{f_{ij}} \; \nonumber \\ 
    & \qquad \qquad \left( \left(\mathbf I + \lfloor \delta \boldsymbol \phi_j \rfloor + \frac{1}{2} \lfloor \delta \boldsymbol \phi_j \rfloor^2 \right) \mathbf t_j + \delta \mathbf t_j \right)^T \mathbf u_{\text{min}}^{(k)} \; \nonumber \\
    =& \underbrace{\frac{2}{N_i} {\mathbf u_{\text{min}}^{(k)}}^T \mathbf R_j \mathbf v_{f_{ij}} {\mathbf t_j}^T \mathbf u_{\text{min}}^{(k)}}_{h_0} + \underbrace{ \left( \frac{2}{N_i} {\mathbf u_{\text{min}}^{(k)}}^T \mathbf R_j \mathbf v_{f_{ij}} {\mathbf u_{\text{min}}^{(k)}}^T \right) }_{\mathbf h_t} \delta \mathbf t_j \; \nonumber \\
    & + \underbrace{\bigg( \frac{2}{N_i} {\mathbf u_{\text{min}}^{(k)}}^T \mathbf t_j \mathbf v_{f_{ij}}^T \mathbf R_j^T \lfloor \mathbf u_{\text{min}}^{(k)} \rfloor }_{\mathbf h_\phi} \; \nonumber \\
    & \qquad \underbrace{ + \frac{2}{N_i} {\mathbf u_{\text{min}}^{(k)}}^T \mathbf R_j \mathbf v_{f_{ij}} {\mathbf t_j}^T \lfloor \mathbf u_{\text{min}}^{(k)} \rfloor \bigg)}_{\mathbf h_\phi} \delta \boldsymbol \phi_j \; \nonumber \\
    &+ \frac{1}{2} \delta \boldsymbol \phi_j^T \underbrace{\bigg( \frac{2}{N_i} \lfloor {\mathbf u_{\text{min}}^{(k)}} \rfloor \lfloor \mathbf R_j \mathbf v_{f_{ij}} {\mathbf t_j}^T \mathbf u_{\text{min}}^{(k)} \rfloor }_{\mathbf h_{\phi \phi}} \; \nonumber \\
    & \qquad \underbrace{ + \frac{2}{N_i} \lfloor \mathbf t_j \mathbf v_{f_{ij}}^T \mathbf R_j^T \mathbf u_{\text{min}}^{(k)} \rfloor \lfloor {\mathbf u_{\text{min}}^{(k)}} \rfloor}_{\mathbf h_{\phi \phi}} \; \nonumber \\
    & \qquad \underbrace{- \frac{4}{N_i} \lfloor {\mathbf u_{\text{min}}^{(k)}} \rfloor \mathbf R_j \mathbf v_{f_{ij}} \mathbf t_j^T \lfloor \mathbf u_{\text{min}}^{(k)} \rfloor \bigg) }_{\mathbf h_{\phi \phi}} \delta \boldsymbol \phi_j \; \nonumber \\
    & + \delta \boldsymbol \phi_j^T \underbrace{ \left( - \frac{2}{N_i} \lfloor {\mathbf u_{\text{min}}^{(k)}} \rfloor \mathbf R_j \mathbf v_{f_{ij}} {\mathbf u_{\text{min}}^{(k)}}^T \right) }_{\mathbf h_{\phi t}} \delta \mathbf t_j + o \left( \delta \boldsymbol \phi_j^2 \delta \mathbf t_j \right)
\end{align}
\normalsize
We then combine $g(\mathbf{R}_j)$, $h(\mathbf{R}_j, \mathbf{t}_j)$, and $l(\mathbf{t}_j)$ to compute the final perturbation result of $f(\mathbf{T}_j \mid \mathbf{T}_j^{(k)})$.
\footnotesize
\begin{align}
    f(\mathbf t_j \boxplus \delta \mathbf T_j \mid \mathbf T^{(k)}) =& f_0 + \mathbf J_j \delta \mathbf T_j + \frac{1}{2} \delta \mathbf T_j^T \mathbf H_j \delta \mathbf T_j + o(\delta \mathbf T_j^3) \; \nonumber \\
\end{align}
\normalsize
where
\footnotesize
\begin{align}
    f_0 =& g_0 + h_0 + l_0 \; \nonumber \\
    \mathbf J_j =& \begin{bmatrix}
        \mathbf g_\phi + \mathbf h_\phi + \mathbf l_{\phi} & \mathbf h_t + \mathbf l_t
    \end{bmatrix} \; \nonumber \\
    \mathbf H_j =& \begin{bmatrix}
        \mathbf g_{\phi \phi} + \mathbf h_{\phi \phi} + \mathbf l_{\phi \phi} & \mathbf h_{\phi t} + \mathbf l_{\phi t} \\
        (\mathbf h_{\phi t} + \mathbf l_{\phi t})^T & \mathbf l_{tt}
    \end{bmatrix}
\end{align}
\normalsize
which also implies that
\footnotesize
\begin{align}
\label{eq.jacob_and_hess}
    \frac{\partial c_{M_i}(\mathbf T \mid \mathbf T^{(k)})}{\partial \mathbf T} &= \begin{bmatrix}
        \mathbf J_0 & \mathbf J_1 & \cdots &  \mathbf J_{M_p}
    \end{bmatrix} \; \nonumber \\
    \frac{\partial^2 c_{M_i}(\mathbf T \mid \mathbf T^{(k)})}{(\partial \mathbf T)^2} &= {\mathrm{diag}}(\mathbf H_1, \mathbf H_2, \cdots , \mathbf H_{M_p})
\end{align}
\normalsize
And the Hessian matrix is block diagonal.
\section{{Proof of Corollary 1}} \label{proof.corollary1}
For the $i$-th dimension $\xi$ in state $\mathbf{T}$, the first-order partial derivatives of $c_M(\mathbf T |\mathbf T^{(k)})$ and $c(\mathbf T)$ with respect to $\xi$ at $\mathbf T^{(k)}$ are defined as the following limits:
\scriptsize
\begin{align} \label{eq:limits}
    \left( \frac{\partial c_M(\mathbf T |\mathbf T^{(k)})}{\partial \xi} \right) \left( \mathbf T^{(k)} \right) &= \lim_{\delta \xi \rightarrow 0}\frac{c_M(\mathbf T^{(k)} \boxplus \delta \mathbf T |\mathbf T^{(k)}) - c_M(\mathbf T^{(k)} |\mathbf T^{(k)})}{\delta \xi} \; \nonumber \\
    \left( \frac{\partial c(\mathbf T)}{\partial \xi} \right) \left( \mathbf T^{(k)} \right) &= \lim_{\delta \xi \rightarrow 0}\frac{c(\mathbf T^{(k)} \boxplus \delta \mathbf T) - c(\mathbf T^{(k)})}{\delta \xi}
\end{align}
\normalsize
where $\delta \mathbf T = [0, \cdots, \delta \xi, \cdots, 0]$ and is nonzero only at the $i$-th dimension.

Then we consider the one-sided derivatives of (\ref{eq:limits}). When we approach $\mathbf{T}^{(k)}$ from the positive direction, i.e., $\delta \xi \rightarrow 0^+$, we obtain:
\scriptsize
\begin{align} \label{eq.right_lim}
     \left( \frac{\partial c_M(\mathbf T |\mathbf T^{(k)})}{\partial \xi} \right) \left( \mathbf T^{(k)} \right) =& \lim_{\delta \xi \rightarrow 0^+}\frac{c_M(\mathbf T^{(k)} \boxplus \delta \mathbf T |\mathbf T^{(k)}) - c_M(\mathbf T^{(k)} |\mathbf T^{(k)})}{\delta \xi} \; \nonumber \\
    =& \lim_{\delta \xi \rightarrow 0^+}\frac{c_M(\mathbf T^{(k)} \boxplus \delta \mathbf T |\mathbf T^{(k)}) - c(\mathbf T^{(k)})}{\delta \xi} \; \nonumber \\
    \geq& \lim_{\delta \xi \rightarrow 0^+}\frac{c(\mathbf T^{(k)} \boxplus \delta \mathbf T) - c(\mathbf T^{(k)})}{\delta \xi} \; \nonumber \\
    =& \left( \frac{\partial c(\mathbf T)}{\partial \xi} \right) \left( \mathbf T^{(k)} \right)
\end{align}
\normalsize
When we approach $\mathbf{T}^{(k)}$ from the negative direction, i.e. $\delta \xi \rightarrow 0^-$, we obtain:
\scriptsize
\begin{align} \label{eq.left_lim}
     \left( \frac{\partial c_M(\mathbf T |\mathbf T^{(k)})}{\partial \xi} \right) \left( \mathbf T^{(k)} \right) =& \lim_{\delta \xi \rightarrow 0^-}\frac{c_M(\mathbf T^{(k)} \boxplus \delta \mathbf T |\mathbf T^{(k)}) - c_M(\mathbf T^{(k)} |\mathbf T^{(k)})}{\delta \xi} \; \nonumber \\
    =& \lim_{\delta \xi \rightarrow 0^-}\frac{c_M(\mathbf T^{(k)} \boxplus \delta \mathbf T | \mathbf T^{(k)}) - c(\mathbf T^{(k)})}{\delta \xi} \; \nonumber \\
    \leq& \lim_{\delta \xi \rightarrow 0^-}\frac{c(\mathbf T^{(k)} \boxplus \delta \mathbf T) - c(\mathbf T^{(k)})}{\delta \xi} \; \nonumber \\
    =& \left( \frac{\partial c(\mathbf T)}{\partial \xi} \right) \left( \mathbf T^{(k)} \right)
\end{align}
\normalsize
This implies that:
\scriptsize
\begin{align} \label{eq.partial_eq}
     \left( \frac{\partial c_M(\mathbf T |\mathbf T^{(k)})}{\partial \xi} \right) \left( \mathbf T^{(k)} \right) = \left( \frac{\partial c(\mathbf T)}{\partial \xi} \right) \left( \mathbf T^{(k)} \right)
\end{align}
\normalsize
Since (\ref{eq.partial_eq}) holds for any dimension of state $\mathbf T$, we finally have (\ref{eq.jacob_eq}).
}

\bibliographystyle{SageH}
\bibliography{reference}

\end{document}